\documentclass{article}

\usepackage{microtype}
\usepackage{graphicx}
\usepackage{subfigure}
\usepackage{booktabs} 
\usepackage{hyperref}

\usepackage[noend]{algorithmic}
\usepackage[accepted]{icml2021}

\usepackage{amssymb}
\usepackage{epstopdf}
\usepackage{amsmath}
\usepackage{amsthm}
\usepackage{url}
\usepackage{color}
\usepackage{epstopdf}
\usepackage{bm}
\usepackage{dsfont}
\usepackage[ISO]{diffcoeff}
\usepackage{optidef}
\usepackage{xspace}

\renewcommand{\cite}{\citep}
\newtheorem{thm}{Theorem}
\newtheorem{prop}[thm]{Proposition}

\newtheorem{cor}[thm]{Corollary}
\newtheorem{deff}[thm]{Definition}

\newcommand{\cH}{\mathcal{H}}

\newcommand{\cN}{\mathcal{N}}
\newcommand{\cO}{\mathcal{O}}
\newcommand{\cS}{\mathcal{S}}
\newcommand{\cT}{\mathcal{T}}

\newcommand{\cX}{\mathcal{X}}

\newcommand{\bbR}{\mathbb{R}}
\newcommand{\bbN}{\mathbb{N}}
\newcommand{\bbZ}{\mathbb{Z}}
\newcommand{\bbE}{\mathbb{E}}

\newcommand{\bx}{\mathbf{x}}

\newcommand{\bp}{\mathbf{p}}
\newcommand{\bu}{\mathbf{u}}

\newcommand{\bone}{\mathbf{1}}
\newcommand{\bbone}{\mathds{1}}
\newcommand{\conv}[1]{\mathrm{conv}(#1)}
\newcommand{\convzero}[1]{\mathrm{conv}_{0}(#1)}

\newcommand{\ie}{\textit{i.e.}}
\newcommand{\eg}{\textit{e.g.}}

\newcommand{\pp}{\mathcal{PP}\xspace}
\newcommand{\dpp}{\partial\mathcal{PP}\xspace}
\newcommand{\mpp}{\mathcal{MPP}\xspace}
\newcommand{\cat}{\mathrm{Cat}}
\newcommand{\bpi}{\bm{\pi}}
\newcommand{\blambda}{\bm{\lambda}}

\sloppy

\icmltitlerunning{A Differentiable Point Process with Its Application to Spiking Neural Networks}
\begin{document}

\twocolumn[
\icmltitle{A Differentiable Point Process with Its Application to Spiking Neural Networks}



\icmlsetsymbol{equal}{*}

\begin{icmlauthorlist}
\icmlauthor{Hiroshi Kajino}{ibm}
\end{icmlauthorlist}

\icmlaffiliation{ibm}{IBM Research - Tokyo, Tokyo, Japan}

\icmlcorrespondingauthor{Hiroshi Kajino}{kajino@jp.ibm.com}

\icmlkeywords{Point process, Spiking neural networks}

\vskip 0.3in
]



\printAffiliationsAndNotice{}  

\begin{abstract}
 This paper is concerned about a learning algorithm for a probabilistic model of spiking neural networks~(SNNs).
 \citeauthor{10.3389/fncom.2014.00038}~(\citeyear{10.3389/fncom.2014.00038}) proposed a stochastic variational inference algorithm to train SNNs with hidden neurons.
 The algorithm updates the variational distribution using the score function gradient estimator, whose high variance often impedes the whole learning algorithm.
 This paper presents an alternative gradient estimator for SNNs based on the path-wise gradient estimator.
 The main technical difficulty is a lack of a general method to differentiate a realization of an arbitrary point process,
 which is necessary to derive the path-wise gradient estimator.
 We develop a differentiable point process, which is the technical highlight of this paper, and apply it to derive the path-wise gradient estimator for SNNs.
 We investigate the effectiveness of our gradient estimator through numerical simulation.
\end{abstract}

\section{Introduction}
A spiking neural network~(SNN) is an artificial neural network~(ANN) where neurons communicate with each other using \emph{spikes} rather than \emph{real values} as the conventional ANNs do.
The conventional ANN is a special case of SNN where information is encoded into the firing rate of neurons~(which we call the rate coding) and the rate serves as the communication currency.
This specification facilitates developing learning algorithms for ANNs, leading to the recent great success of deep neural networks.
On the other hand, in the community of neuroscience, experimental evidence on biological neurons indicates that the rate coding alone cannot explain the whole brain activity~\cite{Bothe2004} and more precise modeling of neural coding is anticipated.
Since there still exist performance gaps between the rate-based ANNs and biological neural networks~(\ie, brains) in terms of inference capability and energy efficiency,
this raises the following question: how much of the current performance gaps can be attributed to this difference on neural coding?
This open problem motivates us to study SNNs.

One of the major obstacles towards answering it is a lack of practical learning algorithms for SNNs,
which discourages us from empirical investigation.
While there exist a number of attempts to develop learning algorithms, most of them have more or less limited applicability.
We consider a practical learning algorithm should at least be (i)~theoretically grounded, (ii)~empirically confirmed to work well, and (iii)~easy to simulate~(fewer hyperparameters, less computation time, etc.)\footnote{Biological plausibility is of another great interest, but is not mentioned because it is not mandatory for engineering purposes.}.
For example, theoretical aspects of the algorithms based on spike-timing-dependent plasticity~(Chapter~19~\cite{gerstner2014}) are not well understood.
For another example, simulating learning algorithms for continuous-time deterministic SNNs requires the step-size parameter of time-axis discretization when the dynamics of a neuron is described by differential equations~(\eg, \cite{NIPS2018_7417}). The step-size parameter brings about the trade-off between the simulation quality and computation time, which makes the simulation more intricate.
These examples illustrate that even major approaches do not satisfy all the requirements above, and therefore, there still exists much room for improvement.

Among a number of approaches, we employ as a foundation a probabilistic formulation of SNNs~\cite{doi:10.1162/neco.2006.18.6.1318},
which models spike trains~(temporal sequence of spikes emitted from neurons) as a realization of a multivariate point process.
It is easier for us to start from it than others because it already satisfies requirements (i) and (iii), which are more intrinsic properties than requirement (ii).
In fact, learning algorithms are formalized by maximum likelihood estimation, and
its exact simulation has no trade-off hyperparameter as will be explained in Section~\ref{sec:sampling-from-point}.
Therefore, the remaining concern is its empirical performance.

One of the state-of-the-art learning algorithms for probabilistic SNNs is the work by \citeauthor{10.3389/fncom.2014.00038}~(\citeyear{10.3389/fncom.2014.00038}).
The authors propose a stochastic variational inference algorithm for SNNs with hidden neurons.
Since spike trains of hidden neurons are unobservable and it is intractable to compute the marginal likelihood,
an evidence lowerbound~(ELBO) is instead used as the objective function~(Section~\ref{sec:learning-algorithms}).
The key factor for optimizing ELBO is the way we estimate the gradient of ELBO.
The authors employed the score function gradient estimator, also known as the REINFORCE estimator, which is widely applicable but is often reported to suffer from its high variance.

Our main idea is to substitute a path-wise gradient estimator for the score function gradient estimator.
The path-wise gradient estimator tends to have lower variance than the score function gradient estimator~\cite{mohamed2019monte},
but it is not widely applicable~(and is not applicable to SNNs) because it requires a sample from the variational distribution to be differentiable.
Our contribution is that we develop a differentiable point process~(Section~\ref{sec:diff-point-proc}) and apply it to derive the path-wise gradient estimator for SNNs~(Section~\ref{sec:grad-with-resp-phi}). 
\if0
$\dpp$ is a point process whose realization is differentiable with respect to the model parameter,
\fi

We empirically investigate the effectiveness of the proposed learning algorithm in Section~\ref{sec:empirical-study}.
We will confirm that (i)~the proposed gradient estimator has lower variance than the existing one and (ii)~this lower variance contributes to improve the performance of the learning algorithm.
By comparing the performance of the proposed and existing ones, we obtain experimental results supporting these hypotheses.
Therefore, we conclude that our path-wise gradient estimator improves empirical performance of SNNs.

One of the limitations of our learning algorithm as compared to the existing algorithm~\cite{10.3389/fncom.2014.00038} is computation time.
Since our algorithm generates more hidden spikes than the existing one does, our algorithm requires more computation time.
We empirically examine the computational overhead of our algorithm against the existing one, and find that our algorithm requires 2.8 times more computation time than the existing one.

\textbf{Notation.} Let $[N]=\{1,2,\dots,N\}$.
For any vector~$\bx$, its $d$-th element is represented by $x_d$.
$\begin{bmatrix}x_d\end{bmatrix}_{d\in[D]}$ denotes a $D$-dimensional vector whose $d$-th element is $x_d$.
For any vector~$\bx\in\bbR^D$ and scalar $c\in\bbR$, $\begin{bmatrix}\bx^\top & c\end{bmatrix}^\top$ denotes the $(D+1)$-dimensional vector concatenating $\bx$ and~$c$.
Let $\bbR_{\geq 0}=\{x \geq 0 \}$ and $\bbR_{> 0}=\{x > 0 \}$.
%
Let $\bbone^D=\{\bone_d \}_{d\in[D]}$ be the set of $D$-dimensional one-hot vectors, where $\bone_d\in\{0,1\}^D$ is the one-hot vector whose $d$-th element is $1$ and the others are $0$.
For any set $A$, let $\conv{A}$ be its convex hull, let $\convzero{A}:=\conv{A\cup\{0\}}$.
%
Let $\cat(\bp)$ be the categorical distribution with parameter $\bp\in\conv{\bbone^D}$,
whose random variable takes $\bone_d\in\bbone^D$ with probability $p_d$.
Let $U[a,b]$ denote the uniform distribution over $[a,b]$.
For any expectation operator $\bbE_p$, let $\hat{\bbE}_p$ be its Monte-Carlo approximation using an i.i.d. sample from~$p$.

\section{Preliminaries}
This section introduces temporal point processes along with their parameter estimation and sampling methods.

\subsection{Point Processes}

A point process~\cite{daley2003} is a probabilistic model of an event collection. It is called a \emph{temporal point process} when the event collection evolves in time.
This paper only deals with a temporal point process, and therefore, we refer to it as a point process.
We assume that point processes are \emph{simple}, \ie, no events coincide. 

\subsubsection{Univariate Point Process}
Assume we observe a sequence of $N\in\bbN$ discrete events during time interval $[0, T]$, and let $\cT$ denote such an observation.
$\cT$ can be represented by a series of event time stamps $\{t_n\in[0,T]\}_{n\in [N]}$ as well as the information that we observe no event during $[0,t_1)$, $\{(t_n, t_{n+1})\}_{n=1}^{N-1}$, and $(t_N, T]$.
Let $\cT^{\leq t_n}$ represents a partial observation of $\cT$ up to and including time $t_n$.
One way of modeling $\cT$ is to specify the probability density function of the event time stamp $t_{n+1}$ given the collection of its past events~$\cT^{\leq t_n}$, which we describe, $f(t\mid \cT^{\leq t_n})$.
Note that the probability density function must satisfy $f(t \mid \cT^{\leq t_n}) = 0$ for $t\leq t_n$ and $\int_{t_n}^\infty f(t \mid \cT^{\leq t_n}) \dl{t} = 1$.
The cumulative distribution function can be defined accordingly: $F(t \mid \cT^{\leq t_n}) = \int_{t_n}^{t} f(s \mid \cT^{\leq t_n}) \dl{s} = \Pr[t_{n+1}\in(t_n, t)\mid \cT^{\leq t_n}]$.

Another way of modeling it is to specify the conditional intensity function, which is related to the distributions as,
\begin{align}
 \label{eq:5} \lambda(t \mid \cT^{\leq t_n}) =
 \begin{cases}
  \dfrac{f\left(t \mid \cT^{\leq t_n}\right)}{1 - F\left(t \mid \cT^{\leq t_n}\right)} & (t > t_n),\\
  0 & (t \leq t_n).
 \end{cases}
\end{align}
In the following, let $t_n$ denote an arbitrary event time stamp and we only specify the conditional intensity function for $t>t_n$, because its value for $t\leq t_n$ is trivially $0$.
Observing that $\lambda(t \mid \cT^{\leq t_n}) \dl{t} = \Pr[t_{n+1}\in[t, t+\dl{t}] \mid t_{n+1}\notin (t_n, t), \cT^{\leq t_n}]$ holds as $\dl{t}\rightarrow +0$~\cite{rasmussen2018}, 
the conditional intensity function represents how likely the event occurs at time $t$ given that we have observed $n$ events so far and no event has been observed during $(t_n, t)$.

A point process is more often specified by the conditional intensity function than the time interval distribution.
Let $\pp(\lambda)$ be the point process with the conditional intensity function $\lambda$.
Corollary~\ref{prop:legal-cond-intensity}, which is an immediate consequence of Proposition~\ref{prop:mult-point-proc-unique}, states the conditions under which the conditional intensity function uniquely specifies a point process.

\begin{cor}
\label{prop:legal-cond-intensity}
 A conditional intensity function $\lambda$ uniquely defines a point process if it satisfies the following conditions for any observation of discrete events $\cT^{\leq t_n}$ and any $t>t_n$:
\begin{enumerate}
 \item $\lambda(t \mid \cT^{\leq t_n})$ is non-negative and integrable on any interval starting at $t_n$,
 \item $\int_{t_n}^t \lambda(s\mid \cT^{\leq t_n})\dl{s}\rightarrow\infty$ as $t\rightarrow\infty$, and
 \item $\int_{t_n}^t \lambda(s\mid \cT^{\leq t_n})\dl{s}$ is right continuous w.r.t. $t$.
\end{enumerate}
\end{cor}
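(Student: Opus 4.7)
The plan is to derive the corollary as a specialization of Proposition~\ref{prop:mult-point-proc-unique} to dimension $D=1$. Since the multivariate proposition already establishes uniqueness of a multivariate point process from its component-wise conditional intensities under analogous conditions, setting $D=1$ collapses the mark space to a singleton, and the three stated conditions become identical to those assumed in the multivariate result. The bulk of the work therefore sits in proving the multivariate statement, after which the corollary follows by inspection.

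For a more self-contained argument, the core step would be to invert the relation in Equation~\eqref{eq:5}. Observing that $\lambda(t \mid \cT^{\leq t_n}) = -\frac{\mathrm{d}}{\mathrm{d}t}\log\bigl(1 - F(t \mid \cT^{\leq t_n})\bigr)$ on $(t_n,\infty)$ and integrating from $t_n$ to $t$, one obtains the candidate cumulative distribution function
\begin{align*}
F(t \mid \cT^{\leq t_n}) = 1 - \exp\left(-\int_{t_n}^{t} \lambda(s \mid \cT^{\leq t_n})\,\mathrm{d}s\right).
\end{align*}
The next step is to verify that this candidate is a valid CDF on $(t_n, \infty)$ and that its density recovers the given $\lambda$ via Equation~\eqref{eq:5}. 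Each of the three listed conditions maps to one requirement on $F$: condition~1 makes the integral well-defined and ensures $F$ is non-decreasing with $F(t_n)=0$ and values in $[0,1]$; condition~2 forces $F(t)\to 1$ as $t\to\infty$, so no probability mass escapes to infinity; and condition~3 yields right-continuity of $F$.

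The construction would then be iterated: given $\cT^{\leq t_n}$, draw $t_{n+1}$ from $F(\cdot\mid\cT^{\leq t_n})$, append to the history, and repeat. The main obstacle in the direct proof is showing that this iteration assembles into a well-defined probability measure on the space of event sequences on $[0,\infty)$, in particular ruling out explosion (accumulation of infinitely many events in bounded time) and verifying measure-theoretic consistency of the conditional specifications. Condition~2 is precisely what prevents explosion at each step, since it makes each inter-event waiting time almost surely finite, but assembling the per-step distributions into a coherent process law is the nontrivial part that Proposition~\ref{prop:mult-point-proc-unique} already handles. Invoking that proposition therefore sidesteps this subtlety and yields the corollary immediately.
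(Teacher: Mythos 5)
Your proposal is correct and follows essentially the same route as the paper: the paper derives the corollary as the $D=1$ specialization of Proposition~\ref{prop:mult-point-proc-unique}, whose proof is exactly the inversion you describe, namely writing $F(t \mid \cT^{\leq t_n}) = 1 - \exp\bigl(-\int_{t_n}^{t} \lambda(s \mid \cT^{\leq t_n})\,\mathrm{d}s\bigr)$ and checking that conditions 1--3 give a valid CDF (non-decreasing, tending to $1$, right-continuous). Your added remark that finiteness of each waiting time alone does not rule out explosion, and that the per-step construction must still be assembled into a coherent process law, is a subtlety the paper's proof does not address either, so deferring it to the proposition is consistent with the paper's own level of rigor.
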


\begin{figure}[t]
  \centering
 \includegraphics[width=.75\hsize]{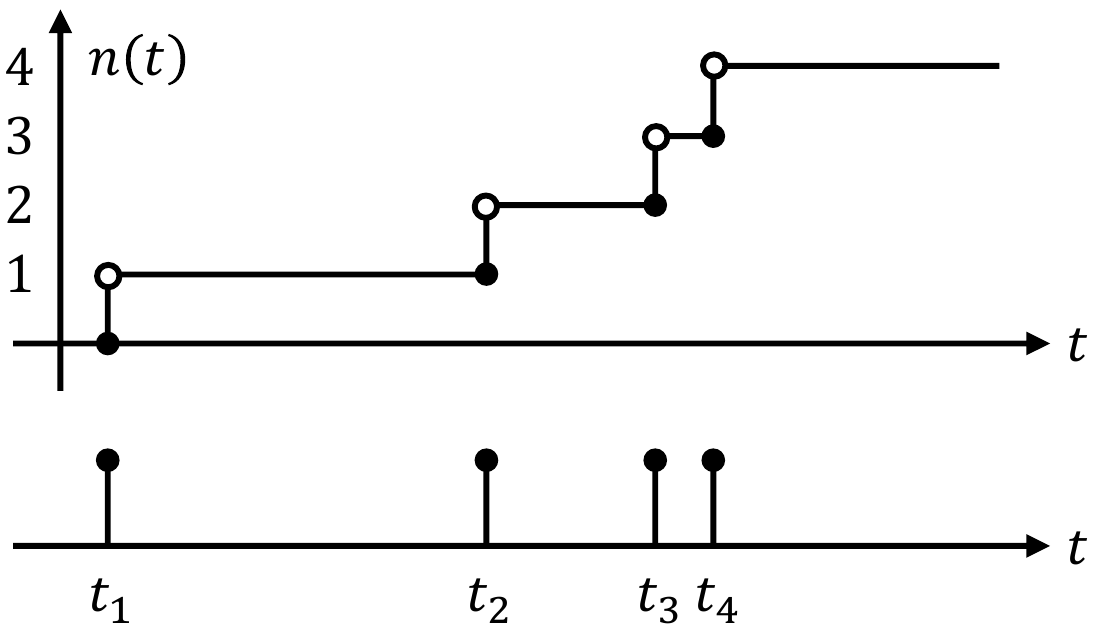}
 \caption{Realization of a temporal point process~(bottom) and its corresponding left-continuous counting process~(top).}\label{fig:counting}
\end{figure}

The log-likelihood of observation $\cT$ on $\pp(\lambda)$ is given as,
\begin{align}
 \label{eq:7}
 \log p(\cT) = \sum_{t\in\cT} \log \lambda\left(t\mid \cT^{\leq t_{n(t)}}\right) - \Lambda^{[0,T]}(\cT),
\end{align}
where let $\Lambda^{[0,T]}(\cT)=\int_0^T \lambda(t\mid \cT^{\leq t_{n(t)}})\dl{t}$ be the integrated conditional intensity function, also known as the \emph{compensator}, which accounts for no-event periods,
and let $n(t):\bbR_{\geq 0}\rightarrow\bbZ_{\geq 0}$ be the left-continuous\footnote{A counting process is usually defined to be right-continuous. We introduce the left-continuous one so as to represent the integrand of the compensator concisely. See Appendix~\ref{sec:relat-stand-notat} for details.} counting process of the observation $\cT$, which counts the number of events up to but not including time $t$.
 The latest event time stamp at time $t$ can be denoted by $t_{n(t)}\in[0,t)$.
 Figure~\ref{fig:counting} illustrates a realization of a point process and its counting representation.
 A typical procedure of modeling $\cT$ is to design a parametric model of the conditional intensity function that satisfies the conditions of Corollary~\ref{prop:legal-cond-intensity} and train it by maximizing the log-likelihood function~(Equation~\eqref{eq:7}).

\subsubsection{Multivariate Point Process}

\begin{figure}[t]
  \centering
 \includegraphics[width=.8\hsize]{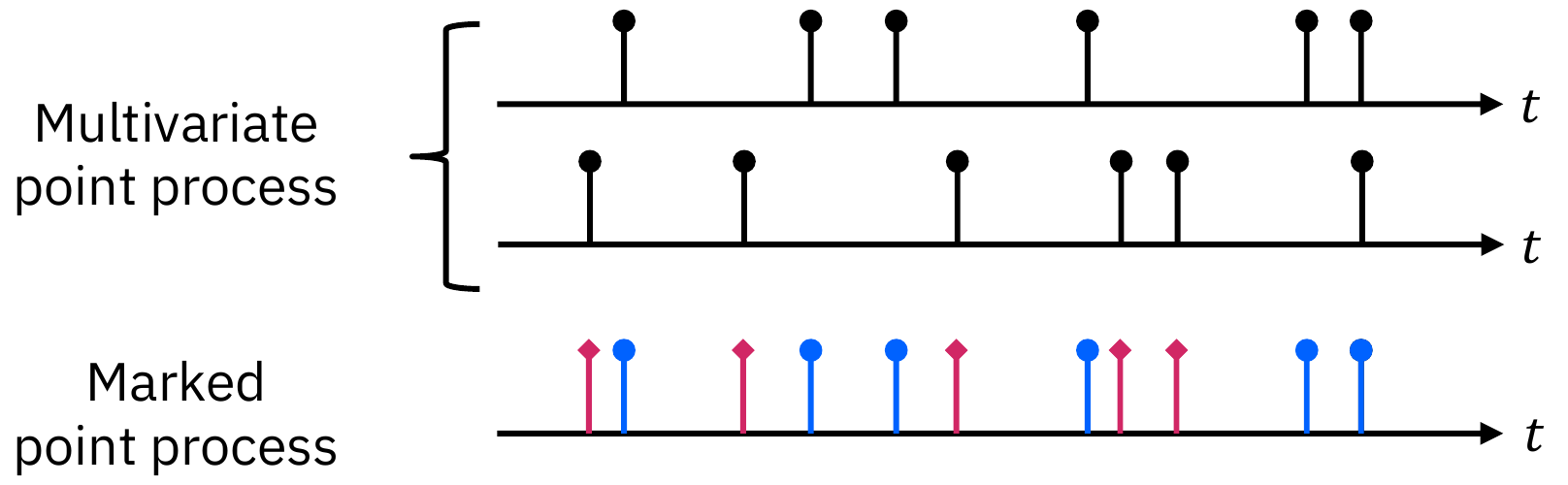}
 \caption{Illustration of a multivariate point process~(top) and its equivalent marked point process~(bottom).}\label{fig:marked-pp}
\end{figure}

A multivariate point process is a set of mutually dependent point processes and can be defined via a \emph{marked point process}, in which each event is associated with a \emph{mark}.
We call a marked point process whose mark belongs to set $X$, an \emph{$X$-marked point process}.
Let $\cT_X$ denote an observation of an $X$-marked point process, which contains a series of event time stamps and marks, $\{(t_n, \bp_n)\in[0, T]\times X\}_{n\in[N]}$.
As illustrated in Figure~\ref{fig:marked-pp}, a $D$-variate point process can be defined by a $\bbone^D$-marked point process,
where each mark $\bp_n$ indicates which dimension the event belongs to. 
For example, if $\bp_n=\bone_1$, the $n$-th event occurs at the first dimension.
In Figure~\ref{fig:marked-pp}, {blue-circle} and {red-diamond} marks correspond to the first and the second dimensions respectively.

Letting $f(t, \bp \mid \cT_{\bbone^D}^{\leq t_{n}})$ be the probability density function of each event $(t_{n+1},\bp_{n+1})$ given its past events $\cT_{\bbone^D}^{\leq t_n}$,
the conditional intensity function can be defined similarly:
\begin{align}
 \label{eq:8}\lambda\left(t, \bp \mid \cT_{\bbone^D}^{\leq t_n}\right) =  \dfrac{f\left(t, \bp \mid \cT_{\bbone^D}^{\leq t_n}\right)}{1 - F\left(t \mid \cT_{\bbone^D}^{\leq t_n}\right)},
\end{align}
where $F(t \mid \cT_{\bbone^D}^{\leq t_{n}}) =  \int_{t_{n}}^t \dl{s} \sum_{\bp \in \bbone^D} f(s, \bp \mid \cT_{\bbone^D}^{\leq t_{n}})$.
The conditional intensity function represents how likely event $(t,\bone_d)$ occurs:
$ \lambda(t,\bone_d \mid \cT_{\bbone^D}^{\leq t_n}) \dl{t}  = \Pr[t_{n+1}\in[t, t+\dl{t}], \bp_{n+1}=\bone_d  \mid t_{n+1}\notin (t_n, t), \cT_{\bbone^D}^{\leq t_n}]$.
Proposition~\ref{prop:mult-point-proc-unique} states conditions under which the conditional intensity function uniquely specifies a marked point process.
See Appendix~\ref{sec:proof-mpp-unique} for its proof.
Let $\mpp(\lambda)$ be the multivariate point process with the conditional intensity function~$\lambda$.
\begin{prop}
 \label{prop:mult-point-proc-unique}
 Let $X$ be a set. A conditional intensity function $\lambda$ uniquely defines an $X$-marked point process if it satisfies the following conditions
 for any $\cT_X^{\leq t_n}$ and $t>t_n$:
\begin{enumerate}
 \item $\lambda(t,\bp \mid\cT_{X}^{\leq t_n})\geq 0$ and integrable w.r.t. $\bp$ and w.r.t. $t$ on any interval starting at $t_n$, 
 \item $\int_{t_n}^t \dl{s} \int_{X}\dl{\bp} \lambda(s, \bp\mid \cT_X^{\leq t_n}) \rightarrow\infty$ as $t\rightarrow\infty$, and
 \item $\int_{t_n}^t \dl{s} \int_{X}\dl{\bp} \lambda(s, \bp\mid \cT_X^{\leq t_n})$ is right continuous in $t$.
\end{enumerate}
\end{prop}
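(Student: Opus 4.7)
The plan is to invert Equation~\eqref{eq:8} to recover the joint density of the next event from $\lambda$ alone, check that it is a bona fide probability density under conditions (1)--(3), and then build up the full $X$-marked point process by iterated conditioning on past events.

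Fix an arbitrary history $\cT_X^{\leq t_n}$ and, for brevity, write $\bar\lambda(t) := \int_X \lambda(t, \bp \mid \cT_X^{\leq t_n})\dl{\bp}$ and $\Lambda(t) := \int_{t_n}^{t}\bar\lambda(s)\dl{s}$. Integrating Equation~\eqref{eq:8} over $\bp \in X$ yields
\begin{align*}
 \bar\lambda(t) = \frac{f(t \mid \cT_X^{\leq t_n})}{1 - F(t \mid \cT_X^{\leq t_n})},
\end{align*}
a separable ODE in $F$ whose solution with boundary condition $F(t_n \mid \cT_X^{\leq t_n})=0$ is $F(t \mid \cT_X^{\leq t_n}) = 1 - \exp(-\Lambda(t))$. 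Substituting back into Equation~\eqref{eq:8} gives the candidate one-step density
\begin{align*}
 f(t, \bp \mid \cT_X^{\leq t_n}) = \lambda(t, \bp \mid \cT_X^{\leq t_n})\exp(-\Lambda(t))
\end{align*}
on $[t_n, \infty) \times X$.

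Next I would verify that $f$ so defined is a legitimate density. Non-negativity is inherited from condition~(1). The total mass is
\begin{align*}
 \int_{t_n}^{\infty}\dl{t}\int_X\dl{\bp}\, f(t, \bp \mid \cT_X^{\leq t_n}) = \int_{t_n}^{\infty}\bar\lambda(t)\exp(-\Lambda(t))\dl{t} = 1 - \lim_{t\to\infty}\exp(-\Lambda(t)),
\end{align*}
which equals $1$ by condition~(2). Condition~(3) forces $F(t \mid \cT_X^{\leq t_n})$ to be right-continuous in $t$ and hence to be a bona fide CDF, while condition~(1) justifies the use of Fubini to swap the mark and time integrals throughout.

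Finally, the whole $X$-marked point process is pinned down by iteration: given the history $(t_1, \bp_1), \dots, (t_n, \bp_n)$, draw $(t_{n+1}, \bp_{n+1})$ from $f(\cdot, \cdot \mid \cT_X^{\leq t_n})$. This produces a consistent family of finite-dimensional distributions, and the Kolmogorov extension theorem glues them into a unique law on the space of $X$-marked counting measures. The main obstacle I expect is the non-explosion check: one needs condition~(2), together with the finiteness of $\Lambda$ on bounded intervals from condition~(1), to conclude that $t_n \to \infty$ almost surely, so that the recursion actually delivers a simple $X$-marked point process on all of $[0, \infty)$ rather than one whose events accumulate in a bounded interval.
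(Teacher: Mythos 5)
Your proposal follows essentially the same route as the paper's proof: integrate Equation~\eqref{eq:8} over the marks, solve the resulting separable relation to obtain $F(t \mid \cT_X^{\leq t_n}) = 1 - \exp(-\Lambda(t))$, and then use condition~(1) for non-negativity/monotonicity, condition~(2) for total mass one, and condition~(3) for right-continuity to conclude that the implied distribution of the next event is proper. The only difference is that you make explicit the final gluing step (iterated conditioning, Kolmogorov extension, and the non-explosion caveat), which the paper leaves implicit, so your argument is, if anything, slightly more complete than the one given there.
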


The log-likelihood of observation $\cT_{\bbone^D}$ is written as:
\begin{align}
 \label{eq:12}
\begin{split}
& \log p(\cT_{\bbone^D}) \\
 = &\sum_{(t,\bp)\in\cT_{\bbone^D}} \log \lambda\left(t,\bp\mid \cT_{\bbone^D}^{\leq t_{n(t)}}\right) - \Lambda^{[0,T]}(\cT_{\bbone^D}),
\end{split}
\end{align}
where let $\Lambda^{[0,T]}(\cT_{\bbone^D})=\int_{0}^T \sum_{\bp\in\bbone^D}\lambda(t, \bp \mid \cT_{\bbone^D}^{\leq t_n(t)}) \dl{t}$ be the compensator.
%
Since its analytical form is not available for a general conditional intensity function, we resort to Monte-Carlo approximation to estimate the compensator.
In specific, we draw $M$ examples, $\{t_m\}_{m\in[M]}$, from $U[0,T]$ and approximate it as,
\begin{align}
 \label{eq:1}\Lambda^{[0,T]}(\cT_{\bbone^D})\approx &\frac{T}{M}  \sum_{m=1}^M \sum_{\bp\in\bbone^D} \lambda\left(t_m, \bp \mid \cT_{\bbone^D}^{\leq t_{n(t_m)}}\right).
\end{align}

\subsection{Sampling Algorithms}\label{sec:sampling-from-point}
This section introduces sampling algorithms for a point process given a conditional intensity function.
A notable feature of the algorithms is that they can exactly simulate point processes without any approximation.
This indicates that there exists no hyperparameter controling the trade-off between computational cost and accuracy of the simulation, which greatly facilitates simulating SNNs.

\subsubsection{Homogeneous Poisson Process}
The simplest point process is the homogeneous Poisson process whose conditional intensity function is constant; $\lambda(t\mid\cT^{\leq t_n})=\lambda$ for any $\cT^{\leq t_n}$.
It is straightforward to sample from it because the interval between two successive events~$\tau$ is independently and identically distributed according to the exponential distribution, $f(\tau; \lambda)=\lambda \mathrm{e}^{-\lambda \tau}$.

\subsubsection{General Point Process}
\begin{algorithm}[t]
 \caption{Thinning algorithm for $\mpp$}\label{alg:thinning-multivariate}
 \textbf{Input: }{Conditional intensity function $\lambda$ and upperbound $\bar{\lambda}$}\\
 \textbf{Output: }{Realization of $\mpp(\lambda)$}
 
\begin{algorithmic}[1]
 \STATE $\cS\leftarrow \emptyset$, $\cT\leftarrow \emptyset$\\
 \WHILE{\TRUE}{
 \STATE Sample $s\sim\mathcal{PP}(\bar{\lambda} \mid \cS)$\\
 \IF{$s>T$}{
 \STATE break
 }
 \ENDIF
 \STATE Sample $\begin{bmatrix}
		 \bp \\ r
		\end{bmatrix}\sim\cat\left(\bm{\pi}_{\bar{\lambda}}\circ\bm{\lambda}\left(s \mid \cT\right)\right)$\\
 \IF{$r \neq 1$}{
 \STATE$\cT\leftarrow \cT \cup \{(s, \bp)\}$
 }
\ENDIF
 \STATE $\cS\leftarrow \cS \cup \{s\}$
 }
 \ENDWHILE
\RETURN $\cT$
\end{algorithmic}
\end{algorithm}

It is not straightforward to sample from a general point process when a closed-form expression of the inter-event time distribution is not available.
This is true for many point processes including SNNs.
Among several sampling methods, the thinning algorithm~\cite{doi:10.1002/nav.3800260304,ogata1981} allows us to sample from such a point process without knowing the closed-form expression.
For other sampling algorithms, please refer to Section~\ref{sec:related-work}.

The main idea is to generate a sequence of time stamps from a homogeneous Poisson process with sufficiently high intensity~(which we call the \emph{base process}) and then to thin some of the events so that the sequence follows the given point process.
Algorithm~\ref{alg:thinning-multivariate} describes it for the multivariate case,
where let $\bm{\lambda}\left(t \mid \cT\right) = \begin{bmatrix}\lambda\left(t, \bone_d \mid \cT\right) \end{bmatrix}_{d\in[D]}$, and
let $\bpi_{\bar{\lambda}}\circ$ be an operator that receives a $D$-dimensional vector $\blambda$ and returns $\frac{1}{\bar{\lambda}}\begin{bmatrix}\bm{\lambda} \\ \bar{\lambda} - \left\|\bm{\lambda}\right\|_1 \end{bmatrix}$.

It first generates a new time stamp $s$ from the homogeneous Poisson process with intensity $\bar{\lambda}$~(line~3).
Then it decides whether or not to accept the event, and if accepting, decides which dimension the event is assigned to~(lines~6-8);
$s$ is rejected if $r=1$, \ie,  with probability $1-{\frac{1}{\bar{\lambda}}}{\sum_{\bp\in \bbone^D}\lambda(s, \bp \mid \cT)}$, and
$s$ is accepted as the event from the $d$-th dimension~($d\in[D]$) if $p_d=1$, \ie, with probability $\lambda\left(s, \bone_d \mid \cT\right)/\bar{\lambda}$,

Intuitively, the correctness of Algorithm~\ref{alg:thinning-multivariate} is understood as follows.
Assuming we have sampled $\cT_{\bbone^D}^{\leq t_n}$, at any time $t > t_n$, the probability that the algorithm generates the event with mark $\bone_d$ in interval $[t, t+\dl{t}]$ is,
\begin{align*}
 &\Pr\left[t_{n+1}\in[t, t+\dl{t}], \bp_{n+1}=\bone_d \mid \cT_{\bbone^D}^{<t}\right]\\
 =& \underbrace{\bar{\lambda} \dl{t}}_{\substack{\text{Prob. that the base process}\\ \text{generates the event in } [t, t+\dl{t}]}}
 \cdot \underbrace{\lambda\left(t, \bone_d \mid \cT_{\bbone^D}^{\leq t_n}\right)/\bar{\lambda}}_{\substack{\text{Prob. that }t\text{ is}\\\text{assigned the }d\text{-th mark}}}\\
 =& \lambda\left(t, \bone_d \mid \cT_{\bbone^D}^{\leq t_n}\right) \dl{t},
\end{align*}
where let $\cT_{\bbone^D}^{<t}$ denote the event $t_{n+1}\notin (t_n,t)$ and $\cT_{\bbone^D}^{\leq t_n}$.
This shows that the output follows $\mpp(\lambda)$.
For its formal proof, please refer to Reference~\cite{ogata1981}.

\section{Differentiable Point Process}\label{sec:diff-point-proc}
We present the key building block of our method called a \emph{differentiable point process}, whose realization is differentiable with respect to its parameters.
Differentiability plays an essential role when designing a learning algorithm for latent variable models as will be discussed in Section~\ref{sec:learning-algorithms}.

The key idea is that the output of Algorithm~\ref{alg:thinning-multivariate} becomes differentiable if we replace the categorical distribution in line~6 with a reparameterizable distribution such as the concrete distribution, also known as the Gumbel-softmax distribution~\cite{maddison2017,jang2017}.
We first review the concrete distribution~(Section~\ref{sec:concr-distr}), and then we present the differentiable point process~(Section~\ref{sec:mult-diff-point}).

\subsection{Concrete Distribution}\label{sec:concr-distr}
The concrete distribution has been developed as a reparameterizable substitute for the categorical distribution.
The idea comes from the Gumbel-max trick, which enables us to sample from the categorical distribution.
Letting $\bpi\in\bbR_{\geq 0}^D$ be an unnormalized parameter of the categorical distribution, the Gumbel-max trick first samples $u_d\sim U[0,1]$ for each $d\in[D]$, and then outputs $\bone_{d^\star}$ where $d^\star=\arg\max_{d\in[D]} \log \pi_d - \log(-\log u_d)$. The output is known to be distributed according to $\cat(\bpi/\|\bpi\|_1)$.
While the Gumbel-max trick successfully divides the sampling procedure into random sampling from the fixed distribution and a parameterized transformation of it, which is necessary to be differentiable,
the gradient of its realization with respect to $\bpi$ is non-informative, because a small variation to $\bpi$ does not change the gradient.

The concrete distribution is defined by relaxing the range of the random variable from $\bbone^D$ to its convex hull $\conv{\bbone^D}$ so that its gradient is more informative.
%
Accordingly, the argmax operator in the Gumbel-max trick is replaced with the {softmax} operator with temperature $\tau>0$.
Since softmax becomes equivalent to argmax as $\tau\rightarrow 0$, the concrete distribution also becomes equivalent to the categorical distribution as $\tau\rightarrow 0$.
Let $g_\tau(\bp;\bpi)$ denote the probability density function of the concrete distribution with temperature $\tau$ and unnormalized parameter $\bpi\in\bbR^{D}_{>0}$.


\subsection{Multivariate Differentiable Point Process}\label{sec:mult-diff-point}

\begin{algorithm}[t]
\caption{Thinning algorithm for $\dpp$} \label{alg:mult-diff-point}
 \textbf{Input: }{Conditional intensity function $\lambda$, its upperbound $\bar{\lambda}$, and temperature $\tau>0$.}\\
 \textbf{Output: }{Realization of $\mathcal{\partial PP}\left(\lambda,\bar{\lambda},\tau\right)$}
 
 \begin{algorithmic}[1]
  \STATE $\cS\leftarrow\emptyset$, $\cT\leftarrow \emptyset$\\
  \WHILE{\TRUE}{
  \STATE Sample $s\sim\mathcal{PP}(\bar{\lambda} \mid \cS)$\\
  \IF{$s>T$}{
  \STATE break
  } \ENDIF
  \STATE Sample $\begin{bmatrix}\bp \\ r \end{bmatrix}\sim\mathrm{Concrete}_\tau\left(\bpi_{\bar{\lambda}}\circ\bm{\lambda}\left(s \mid \cT\right)\right)$\\
 \STATE $\cT\leftarrow \cT \cup \{(s, \bp)\}$\\
  \STATE $\cS\leftarrow \cS \cup \{s\}$
 } \ENDWHILE
 \RETURN $\cT$
 \end{algorithmic}
\end{algorithm}

We present a constructive definition of a differentiable point process in Definition~\ref{def:mult-diff-point}.
%

\begin{deff}
 \label{def:mult-diff-point}
 Assume the conditional intensity function $\lambda(t, \bp \mid \cT_{\bbone^D}^{\leq t_n})$ can be computed with an observation of a $\convzero{\bbone^D}$-marked point process.
 Let $\bar{\lambda}$ be a constant satisfying $\bar{\lambda}>\sum_{\bp\in\bbone^D} \lambda(t, \bp \mid \cT_{\convzero{\bbone^D}}^{\leq t_n})$ for any $\cT_{\convzero{\bbone^D}}^{\leq t_n}$ and  $t > t_n$,
 and $\tau > 0$ be temperature.
 The differentiable point process $\mathcal{\partial PP}\left(\lambda, \bar{\lambda}, \tau\right)$ is defined as a $\convzero{\bbone^D}$-marked point process constructed by Algorithm~\ref{alg:mult-diff-point}.
\end{deff}
\if0
The main idea is to substitute the concrete distribution for the categorical distribution in line 5 of Algorithm~\ref{alg:thinning-multivariate}, resulting in Algorithm~\ref{alg:mult-diff-point}.
In Algorithm~\ref{alg:thinning-multivariate}, $r_k$ indicates whether $s_k$ should be discarded or not,
and if not, mark $\bp_k\in\bbone^D$ is assigned to the event time stamp $s_k$.
In Algorithm~\ref{alg:mult-diff-point}, each event is always accepted, and
$\bp_k\in\convzero{\bbone^{D}}$ is used as a mark of the event $s_k$.
\fi

Algorithms~\ref{alg:thinning-multivariate} and \ref{alg:mult-diff-point} are different in two ways. 
First, all events from the base process are accepted in Algorithm~\ref{alg:mult-diff-point}, while some are rejected in Algorithm~\ref{alg:thinning-multivariate}.
Second, in Algorithm~\ref{alg:thinning-multivariate}, the mark is defined over $\bbone^D$, while in Algorithm~\ref{alg:mult-diff-point}, it is defined over $\convzero{\bbone^D}$; each mark is associated with {amplitude} that is continuous w.r.t. the model parameter.

The differentiable point process as defined above can be understood as a marked point process~(Proposition~\ref{prop:diff2mark}).
\begin{prop}
 \label{prop:diff2mark}
 The differentiable point process $\mathcal{\partial PP}(\lambda,\bar{\lambda},\tau)$ is a $\convzero{\bbone^D}$-marked point process with conditional intensity function,
\begin{align*}
 &\lambda_{\partial}\left(t, \bp \mid \cT_{\convzero{\bbone^{D}}}^{\leq t_n};\blambda, \bar{\lambda}, \tau\right) \\
 = &
 \bar{\lambda} \cdot g_\tau\left(
\begin{bmatrix} 
\bp\\ 1- \|\bp\|_1 
\end{bmatrix}; 
\bpi_{\bar{\lambda}}\circ\bm{\lambda}\left(t \mid \cT_{\convzero{\bbone^D}}^{\leq t_n}\right)\right).
\end{align*}
\if0
 where let,
\begin{align*}
 \bm{\lambda}\left(t \mid \cT_{\convzero{\bbone^D}}^{\leq t_n}\right) &= \begin{bmatrix}\lambda\left(t, \bone_d \mid \cT_{\convzero{\bbone^D}}^{\leq t_n}\right) \end{bmatrix}_{d\in[D]},\\
 \bpi_{\bar{\lambda}}\circ\bm{\lambda} &= \frac{1}{\bar{\lambda}}\begin{bmatrix}\bm{\lambda} \\ \bar{\lambda} - \left\|\bm{\lambda}\right\|_1 \end{bmatrix}.
\end{align*}
\fi
\end{prop}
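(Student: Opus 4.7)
The plan is to apply Proposition~\ref{prop:mult-point-proc-unique} with mark space $X = \convzero{\bbone^D}$: I verify that $\lambda_\partial$ is a legal conditional intensity on this space and then show that the law of the next event produced by Algorithm~\ref{alg:mult-diff-point} agrees with $\lambda_\partial$ infinitesimally. The argument parallels the intuitive correctness check already sketched for Algorithm~\ref{alg:thinning-multivariate}, with two structural differences that I must account for: (i) every base-process event is retained, so no acceptance/rejection factor appears, and (ii) the mark is continuous-valued, so the discrete categorical assignment is replaced by the concrete density.

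First I would check the three legality conditions of Proposition~\ref{prop:mult-point-proc-unique}. Non-negativity is immediate from $g_\tau \geq 0$ and $\bar{\lambda}>0$. For integrability in $\bp$, I would observe that the map $\bp \mapsto [\bp^\top,\ 1-\|\bp\|_1]^\top$ is an affine bijection from $\convzero{\bbone^D}$ onto the probability simplex $\conv{\bbone^{D+1}}$ with unit Jacobian, and therefore
\begin{align*}
\int_{\convzero{\bbone^D}} g_\tau\bigl([\bp^\top,\ 1-\|\bp\|_1]^\top;\, \bpi\bigr)\,\dl{\bp} = 1,
\end{align*}
since $g_\tau(\cdot;\bpi)$ is a probability density on the simplex. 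Consequently,
\begin{align*}
\int_{t_n}^t \dl{s}\int_{\convzero{\bbone^D}}\dl{\bp}\,\lambda_\partial\bigl(s,\bp\mid \cT_{\convzero{\bbone^D}}^{\leq t_n};\bm{\lambda},\bar{\lambda},\tau\bigr) = \bar{\lambda}(t-t_n),
\end{align*}
which is right-continuous in $t$ and diverges as $t\to\infty$, giving all three conditions at once.

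Next I would read off the joint infinitesimal law of the next event from Algorithm~\ref{alg:mult-diff-point}. Given the history $\cT_{\convzero{\bbone^D}}^{\leq t_n}$ and the event $\{t_{n+1}\notin(t_n,t)\}$, line~3 generates a candidate in $[t,t+\dl{t}]$ with probability $\bar{\lambda}\,\dl{t}$; line~7 assigns the continuous mark with conditional density $g_\tau\bigl([\bp^\top,\ 1-\|\bp\|_1]^\top;\, \bpi_{\bar{\lambda}}\circ\bm{\lambda}(t\mid \cT_{\convzero{\bbone^D}}^{\leq t_n})\bigr)$ with respect to $\dl{\bp}$; and line~8 accepts it unconditionally. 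Multiplying these factors and comparing to Equation~\eqref{eq:8} identifies the joint intensity as the claimed $\lambda_\partial$, and the uniqueness half of Proposition~\ref{prop:mult-point-proc-unique} then closes the proof.

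The main subtlety will be handling the dimensional mismatch between the $(D{+}1)$-dimensional concrete sample and the $D$-dimensional mark $\bp\in\convzero{\bbone^D}$: I need to be sure that $g_\tau\bigl([\bp^\top,\ 1-\|\bp\|_1]^\top;\cdot\bigr)$ really is the Radon--Nikodym derivative with respect to $D$-dimensional Lebesgue measure on $\convzero{\bbone^D}$. The cleanest way to handle this is to take as the definition of the concrete density its density with respect to Lebesgue measure on the affine hyperplane $\{y\in\bbR^{D+1}_{\geq 0}:\|y\|_1=1\}$ parametrised by its first $D$ coordinates, after which all Jacobians are unity and the rest is bookkeeping. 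The formal correctness of the thinning construction carries over essentially verbatim from \cite{ogata1981}, since only the mark-generation step is modified.
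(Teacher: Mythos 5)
Your proposal is correct and follows essentially the same route as the paper's proof: the core step in both is the infinitesimal factorization $\Pr[t_{n+1}\in[t,t+\dl{t}],\,\bp_{n+1}=\bp\mid\cT^{<t}_{\convzero{\bbone^D}}] = \bar{\lambda}\,\dl{t}\cdot g_\tau\left(\begin{bmatrix}\bp\\ 1-\|\bp\|_1\end{bmatrix};\bpi_{\bar{\lambda}}\circ\blambda\right)$, read off directly from lines~3 and~7 of Algorithm~\ref{alg:mult-diff-point}. Your additional verification of the three conditions of Proposition~\ref{prop:mult-point-proc-unique} (via $\int_{\convzero{\bbone^D}}\lambda_\partial\,\dl{\bp}=\bar{\lambda}$ and the unit-Jacobian parametrization of the simplex) is not spelled out in the paper's proof of this proposition, but it is sound and mirrors the computation the paper carries out later for $\partial$SNN.
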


We can confirm the differentiability of a realization of $\partial\mathcal{PP}$~(Proposition~\ref{prop:differentiability}).
We can also confirm that in the limit of $\tau\rightarrow 0$, the differentiable point process becomes equivalent to the original point process~(Proposition~\ref{prop:convergence}).
See Appendix~\ref{appendix:prop-diff-point} for their formal statements and proofs.

As discussed by \citeauthor{maddison2017}~(\citeyear{maddison2017}), the concrete distribution often suffers from underflow and we have to implement it in the logarithmic scale.
Our implementation also suffers from the same issue, and we provide a numerically stable implementation idea in Appendix~\ref{sec:numer-stable-impl}.

\section{Learning Algorithm for SNNs}

We present a learning algorithm for spiking neural networks~(SNNs) based on the differentiable point process.
We first define a probabilistic model of SNNs~(Section~\ref{sec:probabilistic-model}) and then will present our learning algorithm, highlighting the difference from the existing one~(Section~\ref{sec:learning-algorithms}).

\subsection{Probabilistic Model of Spiking Neural Networks}\label{sec:probabilistic-model}
We employ the standard probabilistic model in the literature~\cite{doi:10.1162/neco.2006.18.6.1318}.
Let $D$ be the number of neurons, let $\cN=\bbone^D$ be the set of neurons, each of which is indexed by a one-hot vector,
and let $\cT_{\cN}$ be spike trains emitted from SNN during time interval $[0, T]$. 
We assume that $\cT_{\cN}$ is a realization of an $\cN$-marked point process. 

We define the conditional intensity function based on a \emph{spike response model}~(SRM)~\cite{gerstner2014}. 
SRM assumes that the $d$-th spiking neuron is driven by its internal state called a membrane potential,
\begin{align}
 \label{eq:11}
\begin{split}
 u_d\left(t \mid \cT_{\cN}^{\leq t_n}\right) = \bar{u}_d + \sum_{(t^\prime, \bp)\in\cT_{\cN}^{\leq t_n}} \mathbf{f}_{d}(t-t^\prime)\cdot \bp,
\end{split}
\end{align}
where $\mathbf{f}_d(s) = \begin{bmatrix}f_{d^\prime,d}(s) \end{bmatrix}_{d^\prime\in[D]}$ is a vector of filter functions from all of the neurons to the $d$-th neuron.
In specific, $f_{d^\prime, d}(s)$ describes the time course of the membrane potential of neuron $d$ in response to a spike emitted by neuron $d^\prime$ at time $s=0$.
We assume $f_{d, d}(s)\leq 0$ for all $d\in\cN$.
This assumption allows us to reproduce the resetting behavior of a biological neuron; the membrane potential is reset to a lower level after the neuron fires.
We also assume that $f_{d^\prime, d}(s)=0$ for $s<0$. This assumption ensures that future events have no influence on past events.

Then, the conditional intensity function is defined by,
\begin{align}
 \label{eq:10}\lambda^{\mathrm{SNN}}(t, \bp \mid \cT_{\cN}^{\leq t_n}) =  \bp \cdot \sigma(\bu(t \mid \cT_{\cN}^{\leq t_n})),
\end{align}
where $\sigma\colon\bbR^D\rightarrow\bbR_{\geq 0}^D$ is element-wisely non-decreasing and differentiable\footnote{We use the sigmoid function multiplied by amplitude $a>0$ element-wisely as $\sigma$, for which $\bar{\lambda}$ is easy to derive.} and let $\bu(t \mid \cT_{\cN}^{\leq t_n}) = \begin{bmatrix} u_d(t \mid \cT_{\cN}^{\leq t_n}) \end{bmatrix}_{d\in[D]}$. 
As the membrane potential of one neuron increases, the neuron is more likely to fire and generate a spike.

For numerical simulation, we assume that the filter functions are parameterized by weights $\{w_{d^\prime,d,l}\in\bbR\}_{l=1}^L$ as,
\begin{align}
 \label{eq:21}f_{d^\prime,d}(s) =
 \begin{cases}
   \sum_{l=1}^L w_{d^\prime, d, l} \cdot \kappa(s-s_l) & (s\geq 0),\\
  0 & (s < 0),
 \end{cases}
\end{align}
where $\{s_l\in\bbR\}_{l=1}^L$ are fixed and $\kappa(s)=\max\{\frac{3}{4}(1-s^2), 0\}$ is the Epanechnikov kernel.
We chose this kernel because the bounded support of the kernel allows us to ignore events that occurred more than a certain period ago for membrane potential computation.
Let $\theta=\{\bar{u}_d\in\bbR\}_{d=1}^D\cup\{w_{d^\prime, d, l}\in\bbR \mid l\in[L]\}_{d, d^\prime=1}^D$ denote the set of model parameters.

\subsection{Learning Algorithms}\label{sec:learning-algorithms}

Assume some of the neurons are hidden and their spike trains are unobservable.
Let $\cO\subset\cN$ and $\cH=\cN\backslash\cO$ be the sets of observable and hidden neurons, respectively.
Accordingly, the spike trains of all of the neurons are divided into observable and hidden ones: $\cT_{\cN}=\cT_{\cO}\cup\cT_{\cH}$.
We consider an estimation procedure for the model parameters of SNN, $\theta$, given a set of observed spike trains~$\left\{\cT_{\cO, n}\right\}_{n=1}^N$.

Letting $p(\cT_\cN;\theta)=p(\cT_{\cO}, \cT_{\cH};\theta)$ be the joint distribution of the observable and hidden spike trains,
the parameter $\theta$ is estimated by maximum likelihood estimation:
\begin{maxi*}|l|[3]
{\theta}{\sum_{n=1}^N \ell\left(\theta; \cT_{\cO,n}\right)}
{}{}
\end{maxi*}
where $\ell(\theta;\cT_{\cO}) = \log \int p(\cT_{\cO}, \cT_{\cH}; \theta) \dl \cT_\cH$ is the marginalized log-likelihood function.
Since it is intractable to compute it, we substitute its lower bound called an \emph{evidence lower bound}~(ELBO) for the marginalized log-likelihood function as the objective function:
\begin{align}
\nonumber  \underline{\ell}(\theta,\phi;\cT_{\cO}) &= \bbE_{q(\cT_{\cH};\phi)}\left[\log p(\cT_{\cO}, \cT_{\cH}; \theta) - \log q(\cT_{\cH};\phi)\right],\\
\label{eq:14} &\equiv \bbE_{q(\cT_{\cH};\phi)} \underline{\ell}(\theta, \phi;\cT_{\cO}, \cT_{\cH}),
\end{align}
where $q(\cT_{\cH};\phi)$ is an arbitrary distribution called a \emph{variational distribution}, parameterized by $\phi$.
We specifically assume that the variational distribution is modeled by SNN driven by both observable and hidden spike trains.
In the following, we omit the index of data $n$ for ease of presentation and consider ELBO using a single observation $\cT_{\cO}$.

Since there exists no closed-form solution to the maximization problem, we resort to stochastic gradient ascent methods, resulting in Algorithm~\ref{alg:part-observ-case}.
The basic procedure to train SNN is to choose one realization $\cT_{\cO}$ from the data set randomly, and update $\theta$ and $\phi$ so as to maximize Equation~\eqref{eq:14}.
In the following, we present both an existing approach and our novel approach to compute the gradients, $\frac{\partial\underline{\ell}}{\partial\theta}$ and $\frac{\partial\underline{\ell}}{\partial\phi}$.

\begin{algorithm}[t]
 \caption{Generic learning algorithm}\label{alg:part-observ-case}
 \textbf{Input: }{Observation $\cT_\cO$, learning rate $\{\alpha_k\}_{k=1}^{K}$.}\\
 \textbf{Output: }{Model parameters  $\theta$, $\phi$.}

 \begin{algorithmic}[1]
  \STATE Initialize $\theta$, $\phi$\\
  \FOR{$k=1,\dots,K$}{
  \STATE Update $\theta\leftarrow\theta + \alpha_k \frac{\partial\underline{\ell}}{\partial\phi}(\theta,\phi;\cT_\cO)$\\
  \STATE Update $\phi\leftarrow\phi + \alpha_k \frac{\partial\underline{\ell}}{\partial\phi}(\theta,\phi;\cT_\cO)$ 
  }\ENDFOR
  \RETURN {$\theta$, $\phi$}  
 \end{algorithmic}
\end{algorithm}

\subsubsection{Gradient with respect to $\theta$}
The gradient with respect to $\theta$ is straightforwardly computed by applying Monte-Carlo approximation:
$\diffp*{\underline{\ell}(\theta,\phi;\cT_\cO)}{\theta} \approx \hat{\bbE}_{q(\cT_{\cH};\phi)}\left[\diffp*{\log p(\cT_{\cO}, \cT_{\cH}; \theta)}{\theta}\right]$.
This can be numerically calculated with the help of automatic differentiation tools.
\if0
Letting $\cT_{\cH,m}^{<T}$~($m\in[M]$) be i.i.d. sample of size $M$ from $q(\cT_{\cH}^{<T};\phi)$, the gradient is approximated as,
\fi

\subsubsection{Gradient with respect to $\phi$}
\label{sec:grad-with-resp-phi}
The gradient with respect to $\phi$ is more involved. In Equation~\eqref{eq:14}, the expectation operator depends on $\phi$ and we cannot exchange $\frac{\partial}{\partial\phi}$ and $\bbE_{q(\cT_\cH;\phi)}$.
There are at least two approaches to computing the gradient in this situation~\cite{mohamed2019monte}.
One approach is to rely on \emph{the score function gradient estimator}, also known as the REINFORCE estimator~\cite{Williams1992}.
While it is widely applicable to a variety of models, it is often reported that the gradient estimator has high variance.
Another approach is \emph{the path-wise gradient estimator}, which makes use of the reparameterization trick~\cite{kingma2014}.
While its variance is often reported to be lower than that of the score function gradient estimator~\cite{mohamed2019monte},
its application is limited because the probability distribution $q$ must be reparameterizable.

In the literature of SNNs, the score function gradient estimator with respect to $\phi$ has been developed by \citeauthor{10.3389/fncom.2014.00038}~(\citeyear{10.3389/fncom.2014.00038}).
Our contribution is to develop a path-wise gradient estimator for SNNs based on a differentiable point process presented in Section~\ref{sec:diff-point-proc}.

\textbf{Score function gradient estimator.}
\citeauthor{10.3389/fncom.2014.00038}~(\citeyear{10.3389/fncom.2014.00038}) used the score function gradient estimator for computing the gradient with respect to $\phi$:
$\frac{\partial\underline{\ell}}{\partial\phi}(\theta, \phi;\cT_\cO) \approx\hat{\bbE}_{q(\cT_\cH;\phi)}[\frac{\partial\log q(\cT_\cH;\phi)}{\partial\phi} (\underline{\ell}(\theta, \phi;\cT_{\cO}, \cT_{\cH}) - 1)]$.
While this is an unbiased estimator of the gradient, its high variance is often problematic.
We employ the variational distribution with the conditional intensity function,
\begin{align}
 \label{eq:19}\lambda_q(t, \bp \mid \cT_{\cN}^{\leq t_n};\phi) = \bp \cdot \sigma(\bu(t \mid \cT_{\cN}^{\leq t_n};\phi)),
\end{align}
for any $\bp\in\cH$. 
In particular, we use shared parameters for the model and the variational distribution, \ie, we set $\phi=\theta$ as we observe it improves the performance.

\textbf{Path-wise gradient estimator.}
We propose a path-wise gradient estimator for SNNs.
Our main idea is to employ the differentiable point process, $\mathcal{\partial PP}(\lambda_q(t, \bp \mid \cT_\cN;\phi); \bar{\lambda}, \tau)$, as the variational distribution, where $\lambda_q$ is defined in Equation~\eqref{eq:19}. 
This allows us to differentiate a Monte-Carlo approximation of ELBO~(Equation~\eqref{eq:14}) using automatic differentiation tools:
\begin{align}
 \label{eq:20}\diffp{\underline{\ell}(\theta,\phi;\cT_{\cO})}{\phi} \approx \diffp{\hat{\bbE}_{\dpp} \underline{\ell}(\theta,\phi;\cT_{\cO}, \cT_{\convzero{\cH}}(\phi))}{\phi}.
\end{align}

The main technical issue in applying the differentiable point process is that its realization $\cT_{\convzero{\cH}}(\phi)$ is incompatible with the SNN model defined by Equations~\eqref{eq:11} and \eqref{eq:10}.
The model assumes that a mark $\bp$ is a one-hot vector, 
while a mark of a differentiable point process belongs to $\convzero{\cH}$.
We address this by devising a \emph{differentiable spiking neural network}~($\partial$SNN),
which can handle a mark in $\convzero{\cH}$, while keeping the conditional intensity function proper.

Let $\bar{\cN}=\cO\cup\convzero{\cH}$ be the set of marks for $\partial$SNN.
We define the membrane potential of neuron $d\in\cN$ as,
\begin{align}
 \label{eq:18} & u_d\left(t \mid \cT_{\bar{\cN}}^{\leq t_n}\right) = \bar{u}_d + \sum_{(t^\prime, \bp)\in\cT_{\bar{\cN}}^{\leq t_n}} \mathbf{f}_{d}(t-t^\prime)\cdot \bp,
\end{align}
and the conditional intensity of $\partial$SNN for $\bp\in\bar{\cN}$ as,
  \begin{align}
 \label{eq:17}&\lambda^{\partial\mathrm{SNN}}\left(t, \bp \mid \cT_{\bar{\cN}}^{\leq t_n}; \bar{\lambda}, \tau\right)\\
\nonumber =& \sum_{\bone_d\in\cO}\delta(\bp-\bone_d) \lambda^{\mathrm{SNN}}\left(t, \bp \mid \cT_{\bar{\cN}}^{\leq t_n}\right)\\
\nonumber &+ \mathbb{I}[\bp\in\convzero{\cH}] \lambda_{\partial}\left(t, \bp_\cH \mid \cT_{\bar{\cN}}^{\leq t_n};\blambda_{\cH}, \bar{\lambda}, \tau\right)   
  \end{align}
where $\blambda_{\cH}\left(t \mid \cT_{\bar{\cN}}^{\leq t_n}\right) = \sigma\left(\begin{bmatrix}u_{d}(t \mid \cT_{\bar{\cN}}^{\leq t_n})\end{bmatrix}_{d\in\cH}\right)$, $\mathbb{I}[\cdot]$ is the indicator function, and $\bp_\cH = \begin{bmatrix}p_d \end{bmatrix}_{d\in\cH}$.

It is necessary to confirm that (i) the conditional intensity function can be calculated using past events whose marks are in $\bar{\cN}$
and (ii) the conditional intensity function satisfies all of the conditions listed in Proposition~\ref{prop:mult-point-proc-unique} for $\cX=\bar{\cN}$.
The first requirement immediately follows from Equations~\eqref{eq:18}~and~\eqref{eq:17}.
In Appendix~\ref{appendix:prop-diff-spik}, we provide the formal statement and proof of the second requirement~(Proposition~\ref{prop:uniqueness}).
We also confirm that ELBO is differentiable~(Proposition~\ref{prop:differentiable-log-likelihood}) and that the differentiable SNN becomes equivalent to the vanilla SNN in the limit of $\tau\rightarrow 0$~(Proposition~\ref{prop:converge-snn}).
\if0
\begin{prop}
 \label{prop:uniqueness}
 Assume that the filter functions are parameterized as Eq.~\eqref{eq:21}.
 Then, the conditional intensity function~(Eq.~\eqref{eq:17}) uniquely defines an $\bar{\cN}$-marked point process.
\end{prop}

Finally, let us discuss two properties of the proposed method.
Proposition~\ref{prop:differentiable-log-likelihood} states that the objective function is differentiable with respect to $\phi$.
Proposition~\ref{prop:converge-snn} states that the extended SNN model converges to the vanilla SNN as $\tau\rightarrow+0$.
\begin{prop}
 \label{prop:differentiable-log-likelihood}
 The Monte-Carlo approximation of ELBO~(Eq.~\eqref{eq:20}) is differentiable with respect to $\phi$ if we employ $\mathcal{\partial PP}(\lambda_q(t, \bp \mid \cT_\cO\cup\cT_\cH;\phi); \bar{\lambda}, \tau)$ as the variational distribution.
\end{prop}
\begin{prop}
 \label{prop:converge-snn}
 In the limit of $\tau\rightarrow +0$,
 a realization of the extended SNN~(Eq.~\eqref{eq:17}) is distributed according to
 the vanilla SNN~(Eq.~\eqref{eq:10}) if we discard events with mark $\bp=\mathbf{0}$.
\end{prop}

To summarize our contribution, we develop a path-wise gradient estimator for SNNs, whose variance is often reported to be lower than the existing estimator.
We develop a differentiable point process, whose realization is differentiable~(Section~\ref{sec:diff-point-proc}) and apply it to SNNs to derive the gradient estimator.
\fi

\section{Empirical Studies}\label{sec:empirical-study}

\begin{table}[t]
 \caption{Configuration of SNN generating a synthetic data set.}\label{tab:snn-config}
 \centering
\small
\begin{tabular}[t]{ll}
\toprule
 Network size & $D=6$, $|\cO|=2$, $|\cH|=4$ \\
 Activation/filter functions & $a=5$, $L=2$, $s_1=0$, $s_2=10$\\
 $\dpp$ & $\tau=0.3$, $\bar{\lambda}=20$\\
 \# of samplings & $100$~(Eq.~\eqref{eq:1}), $1$~(Eq.~\eqref{eq:14})\\
\bottomrule
\end{tabular}
\end{table}

Let us investigate the effectiveness of our gradient estimator through numerical simulation.
Our hypothesis is that (i)~the path-wise gradient estimator will have lower variance than the score function estimator and (ii) lower variance will improve the predictive performance.
We design two experiments~(Sections~\ref{sec:vari-grad-estim} and \ref{sec:pred-perf}) to verify these two hypotheses.
We additionally compare computation cost of the learning algorithms using each of the gradient estimators in Section~\ref{sec:comp-time}.
All the experiments are conducted on IBM Cloud\footnote{Intel Xeon Gold 6248 2.50GHz 48 cores and 192GB memory.}, and the code is publicly available~\cite{DIffSNNGithub}.


\textbf{Data set.} We use a synthetic data set generated by the vanilla SNN~(Equation~\eqref{eq:10}).
Table~\ref{tab:snn-config} summarizes its configuration.
We set $\bar{\lambda}=a|\cH|=20$,
which is the tightest upperbound because we use the sigmoid activation function with amplitude $a$.
The weights are randomly sampled: biases from $U[-1,1]$, off-diagonal kernel weights from $U[-5, 5]$, and diagonal kernel weights from $U[-5, -0.1]$.

\textbf{Methods compared.}
Since our objective is to highlight the performance gap between our path-wise gradient estimator~($\partial$SNN) and the score function gradient estimator~(SNN),
we use the same hyperparameters and initialization for both of them as much as possible.
We initialize their parameters randomly using the same random seed so that both of them have random but the same initial parameters.
We also set their hyperparameters as Table~\ref{tab:snn-config}.
The temperature is the only hyperparameter that impacts the performance gap.
In preliminary experiments, we observe no significant impact for $\tau\in[0.1, 0.5]$, and we only report the result at $\tau=0.3$.

\if0

For the ground-truth network topology, letting the first two dimensions be observable and the last two dimensions be hidden,
we consider a fully-connected architecture with the following weight and bias parameters:
\begin{align}
 \bar{\bu} &= \begin{bmatrix}
	       0 & 0 & 0 & 0
	      \end{bmatrix}^\top,\\
 W_1 &= \begin{bmatrix}
	 0 & 0 & 0 & 0\\
	 0 & 0 & 0 & 0\\
	 0 & 0 & 0 & 0\\
	 0 & 0 & 0 & 0
	\end{bmatrix},\\
 W_2 &= \begin{bmatrix}
	 0 & 0 & 0 & 0\\
	 0 & 0 & 0 & 0\\
	 0 & 0 & 0 & 0\\
	 0 & 0 & 0 & 0
	\end{bmatrix},
\end{align}
where the $(d^\prime, d)$-th element of $W_l$ corresponds to $w_{d^\prime, d, l}$ for $d, d^\prime\in[D]$ and $l=1, 2$.

For SNN to be trained, we initialize the bias term uniform randomly from $[0, 1]$ and the diagonal component of $W_l$ from $[-1, -0.1]$.
We set the upperbound of the diagonal components to be $-0.1$ and at each parameter update we clip them so that they satisfy the constraint.
\fi

\subsection{Variance of the Gradient Estimators}\label{sec:vari-grad-estim}
First, let us study the variance of the gradient estimators.

\textbf{Protocol.}
We generate a single random parameter setting and use it to generate a synthetic data set consisting of $10$ examples of length $50$.
Then, we compute the gradient estimators using the whole data set $1000$ times, which yields $1000$ gradient estimates for each method.
Finally, we compute the standard deviations of each element of the gradients, and report the mean of the standard deviations.

\textbf{Result.}
The mean standard deviation of $\partial$SNN was $66.3$, whereas that of SNN was $2.49\times 10^3$.
This clearly demonstrates that the variance of our estimator tends to be lower than that of the existing estimator.

\subsection{Predictive Performance}\label{sec:pred-perf}
The second experiment studies the predictive performance of the models learned by each of the methods compared.

\textbf{Protocol.}
We generate $24$ random parameter settings, and consistently use them in this experiment.
We aim to evaluate the performance gap between SNN and $\partial$SNN in different sizes of training sets.
To this end, we execute the following, varying the size as $N_{\mathrm{train}}=10$, $20$, $30$, $40$, $50$, $75$ $100$, $200$, and for each parameter setting.

We generate training/test sets consisting of $N_{\mathrm{train}}$/$100$ examples of length $50$ respectively.
SNN and $\partial$SNN are trained on the training set using AdaGrad~\cite{Duchi2011} with initial learning rate $0.05$ for $10$ epochs.
We evaluate the predictive performance by computing ELBO~(Equation~\eqref{eq:14}) on the test set.
For fair comparison, we evaluate the performance of $\partial$SNN by transferring its parameters to the vanilla SNN\footnote{For better transfer, we decrease $\tau$ geometrically by ratio $0.95$ at every epoch, which slightly improves the performance.}.
By repeating this over $24$ parameter settings, we obtain $24$ ELBO scores.
We report their mean as the performance of each method for each $N_{\mathrm{train}}$.

\begin{figure}[t]
\centering
 \includegraphics[width=.78\hsize]{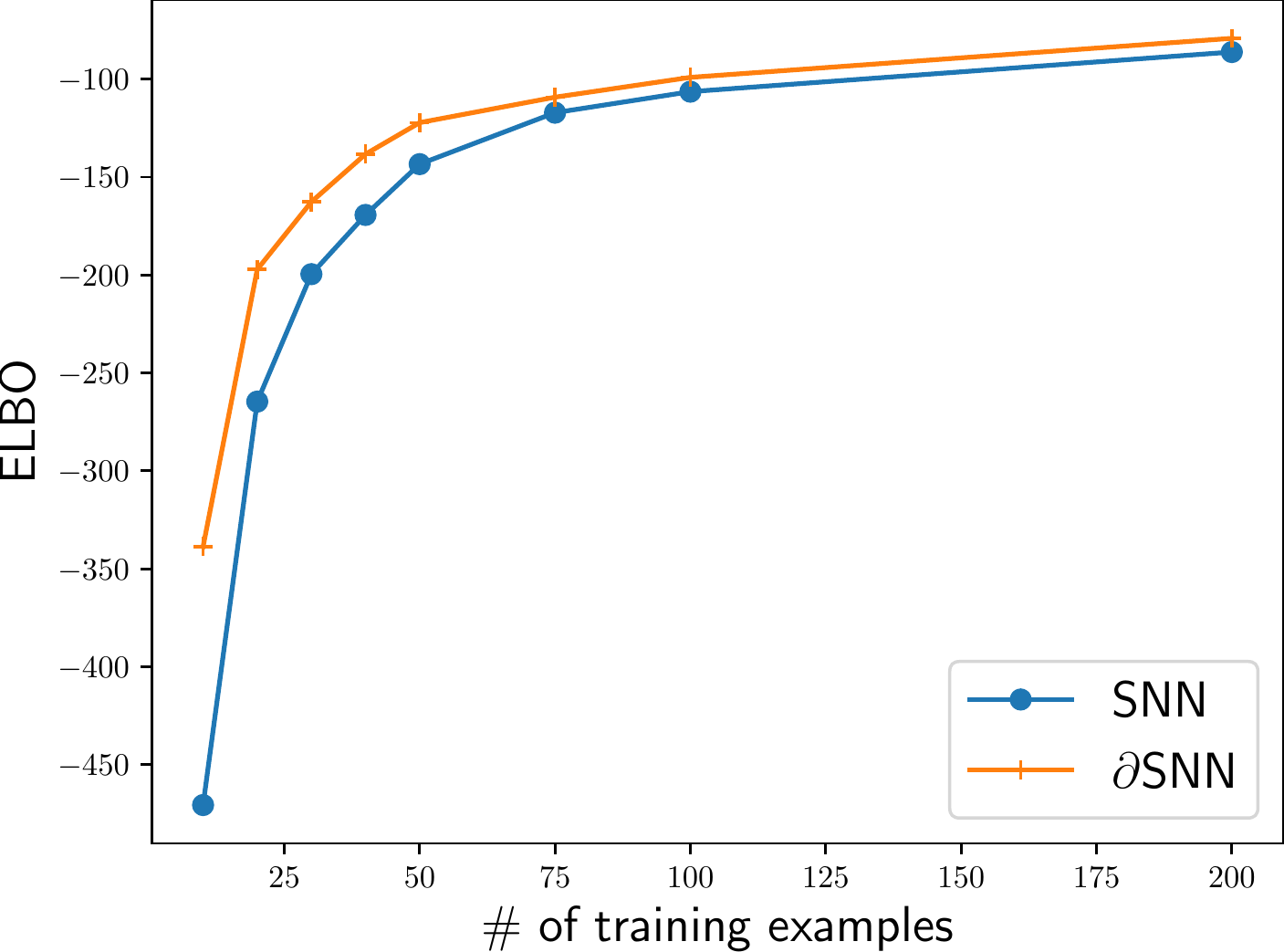}
\caption{Predictive performance of SNN and $\partial$SNN.}
\label{fig:predictive-performance}
\end{figure}

\textbf{Result.} Figure~\ref{fig:predictive-performance} summarizes the experimental results.
It clearly shows that $\partial$SNN consistently outperforms SNN especially in the small-sample regime, which supports the benefit of our low-variance estimator.

\subsection{Computational Overhead}\label{sec:comp-time}
The last experiment studies computation overhead of $\partial$SNN over SNN.
The computation time depends on the number of spikes, and the number of (hidden) spikes is proportional to $a$, the amplitude of the non-linearlity~$\sigma$ that maps the membrane potential into the conditional intensity function.
In general, $\partial$SNN generates more hidden spikes than SNN because the thinning algorithm for the differentiable point process does not reject any of the candidate spikes.
Therefore, we expect that $\partial$SNN requires more computation time than SNN. The purpose of this experiment is to measure the computational overhead of $\partial$SNN over SNN.

\textbf{Protocol.}
We generate a single parameter setting, and generate a training set of $10$ examples of length $50$.
We then set up both SNN and $\partial$SNN with amplitude $a=1,2,\dots,20$, resuting in $40$ models to be trained.
For each model, we measure the computation time of running $100$ epochs, and obtain per-epoch computation time by averaging them.

\begin{figure}[t]
 \centering
 \includegraphics[width=.78\hsize]{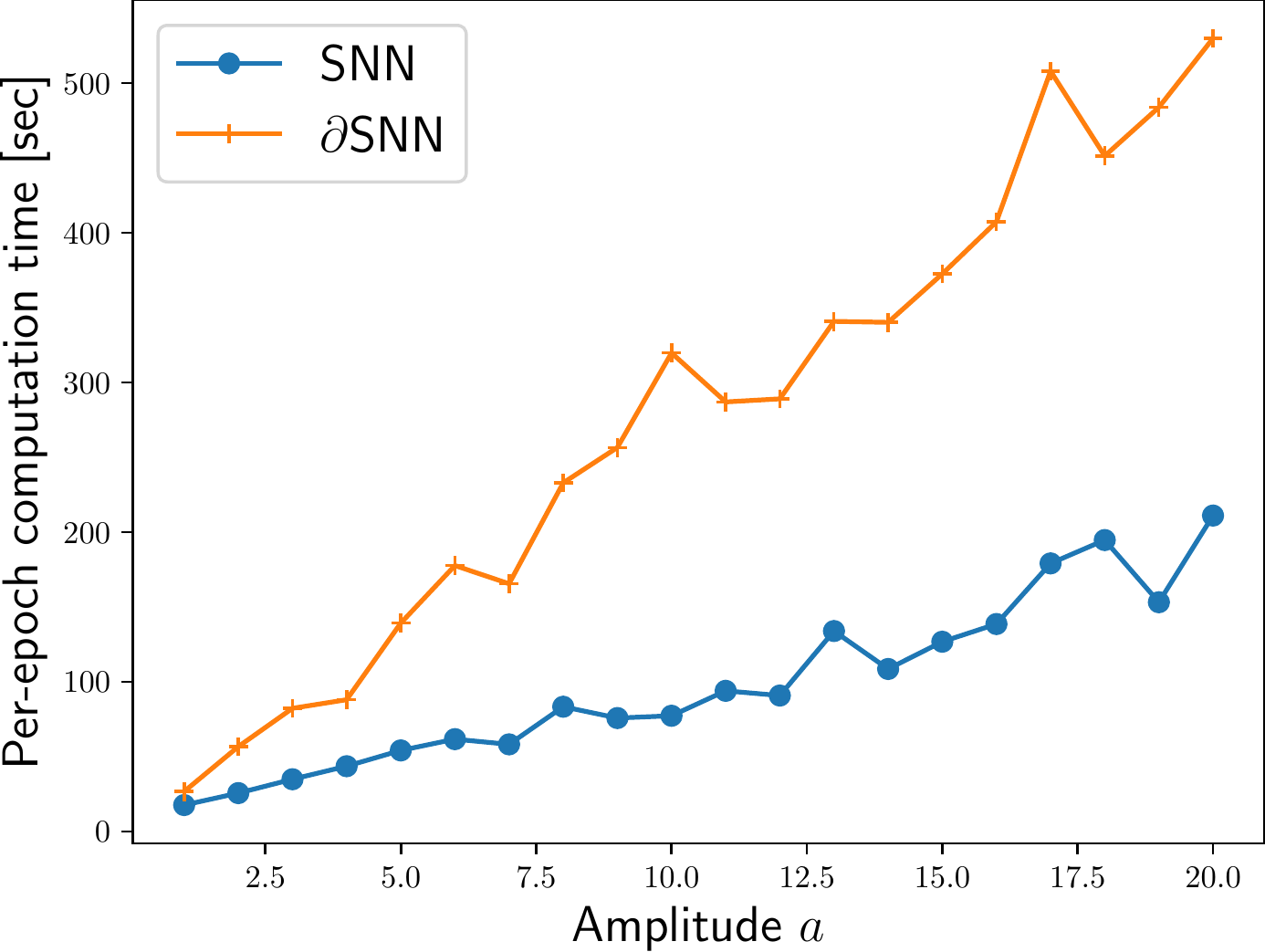}
 \caption{Per-epoch computation time of SNN and $\partial$SNN.}\label{fig:computation-time}
\end{figure}

\textbf{Result.}
Figure~\ref{fig:computation-time} summarizes the experimental results.
As is expected, $\partial$SNN requires $2.8$ times more computation time than SNN on average.
This result can be used as a reference for users to decide which gradient estimator to be employed.
If a user can afford this overhead, our path-wise gradient estimator is recommended; otherwise, please consider to use the score function gradient estimator.

Note that we can improve the computation time of our method by introducing an adaptive upperbound $\bar{\lambda}$ in Algorithm~\ref{alg:part-observ-case}, if it is a tigher upperbound than the fixed upperbound. We leave this improvement as future work.

\section{Related Work}\label{sec:related-work}
The present work is related to the communities of SNNs and point processes.
Let us discuss our contributions to them.

\subsection{Spiking Neural Networks}
The most relevant work is the stochastic variational learning algorithm for SNNs~\cite{10.3389/fncom.2014.00038}.
As discussed in Section~\ref{sec:grad-with-resp-phi}, the difference is the gradient estimator. 
The authors used the score function gradient estimator, because the path-wise gradient estimator~(which became popular by VAE~\cite{kingma2014}) was not popular at that time and the reparameterization trick for point processes was not trivial. 
Our contribution is to develop a differentiable point process that enables us to derive the path-wise gradient estimator.
\if0
Since it is widely acknowledged that the reparameterization trick has often empirically less variance than the score function estimator,
we develop a reparameterizable point process and apply it to the variational learning algorithm for SNNs.
In our empirical study, we empirically validate that our algorithm achieves better generalization ability than that reported in Reference~\cite{10.3389/fncom.2014.00038}.
\fi

Less relevant but still worth mentioning are the line of work in learning algorithms for deterministic SNNs, where a neuron fires when the membrane potential exceeds a threshold.
Although our technique cannot directly contribute to them,
we believe it is worthwhile to compare the pros and cons of these different approaches for further development.
Of a number of approaches proposed so far~\cite{8891809}, we introduce two inspiring studies.

SpikeProp~\cite{Bohte2000} is one of the earliest attempts to develop a learning algorithm for deterministic SNNs.
SpikeProp uses backpropagation to minimize the difference between the target firing times $\{t_n^{\star}\}_{n=1}^N$ and the actual firing times $\{t_n\}_{n=1}^N$ of the network, \ie, $\sum_{n=1}^N |t_n^\star - t_n|^2$.
The gradient is approximated by assuming a linear relationship between the firing time and the membrane potential, which is valid only for a small learning rate.

\citeauthor{NIPS2018_7417}~(\citeyear{NIPS2018_7417}) propose a differentiable alternative to the threshold-based spike generation, which facilitates gradient computation.
They employ a soft-threshold mechanism, and therefore, is differentiable without approximation.
Another important contribution is that their model can handle not only spike trains but also a real-valued time-series.
They use a readout network that maps spike trains from/into a real-valued time-series.
This end-to-end formulation is significant towards practical applications of SNNs, and probabilistic SNNs should be equipped with this feature.

One interesting feature of probabilistic SNNs including our method is that both inference and learning algorithms can be executed naturally in an event-based manner
without any discretization of time axis.
This is in contrast to deterministic SNNs, where many learning algorithms require us to discretize the continuous-time dynamics for simulation.

\subsection{Differentiable Point Processes}
Our differentiable point process is significant in the community of point processes in that it largely expands the applicability of the reparameterization trick for point processes.
Let us review the approaches to differentiable point processes, and discuss their pros and cons.

There are mainly three approaches to sample from point processes, and each of them can be used as a basis of differentiable point processes.
The first approach~\cite{Shchur2020} is to model the inter-event time conditioned on the past history by a log-normal mixture model, instead of modeling the conditional intensity function.
Since it is straightforward to develop a reparameterizable sampling algorithm for the mixture model, the resultant point process is also reparameterizable.
The second one is the inverse method~\cite{rasmussen2018}, which utilizes the fact that the inverse of the compensator $\Lambda^{[0,t]}$ can convert a unit-rate Poisson process into the point process with the corresponding conditional intensity function.
\citeauthor{NEURIPS2020_00ac8ed3}~(\citeyear{NEURIPS2020_00ac8ed3}) propose a reparameterization trick based on the inverse method.
The third one is the thinning algorithm, as we presented.

Of these three approaches, it is interesting to compare the second and the third approaches.
When applying the inverse method~\cite{NEURIPS2020_00ac8ed3} to computing ELBO, it is reported that the objective function contains discontinuous points, making optimization difficult.
The discontinuity arises because time stamps of a realization are parameterized, and the algorithm involves a discrete decision whether a time stamp is less than $T$ or not for termination.
In contrast, Our differentiable point process does not suffer from it because not time stamps but marks are parameterized.
In this sense, these two approaches are complementary.

When developing a path-wise gradient estimator for SNNs, only the third approach is feasible.
The first approach is difficult to be applied because SNNs are modeled via the conditional intensity function, and the inter-event time distribution is not available in a closed form.
The second approach is also difficult due to the lack of a closed-form expression of the inverse of the compensator.
Our approach only assumes the existence of an upperbound of the conditional intensity function, and therefore, can be applied to SNNs.
The assumption on the existence of a constant upperbound can be relaxed in the same way as Ogata's method~\cite{ogata1981}, which determines $\bar{\lambda}$ adaptively.

\section{Conclusion and Future Work}
We develop a path-wise gradient estimator for SNNs based on a differentiable point process.
Given the experimental results in Section~\ref{sec:empirical-study}, we conclude that our estimator has lower variance than the existing one, which contributes to improve the learning capability.

Throughout this paper, we only focus on the dependency of the gradient estimator on learning capability, and we have not discussed about its practical applications.
In the community of SNNs, however, an increasing number of studies have started to apply SNNs to real-world tasks~\cite{NEURIPS2018_82f2b308,Wozniak2020}.
One of the major concerns towards applying our method to real-world tasks is a method to convert real-valued data into/from spike trains.
While there are a number of information encoding methods for spike trains, it is still an open problem which encoding is preferred.
One interesting direction is to empirically and theoretically investigate the performance of different encoding methods and to understand their pros and cons.

Another limitation is the computational overhead as discussed in Section~\ref{sec:comp-time}.
While the probabilistic formulation can be simulated by an event-based manner, the gradient computation involves backpropagation through time~(BPTT), whose complexity increases proportionally to the number of spikes.
In addition to relying on the adaptive upperbound~$\bar{\lambda}$, applying online BPTT calculation and its approximation techniques~\cite{Williams1989} to SNNs may be an interesting research direction.

\newpage
\bibliography{prob_snn}
\bibliographystyle{icml2021}

\newpage
\appendix

\section{Relation to the Standard Notation}\label{sec:relat-stand-notat}
\begin{figure}[t]
\centering
 \includegraphics[width=.8\hsize]{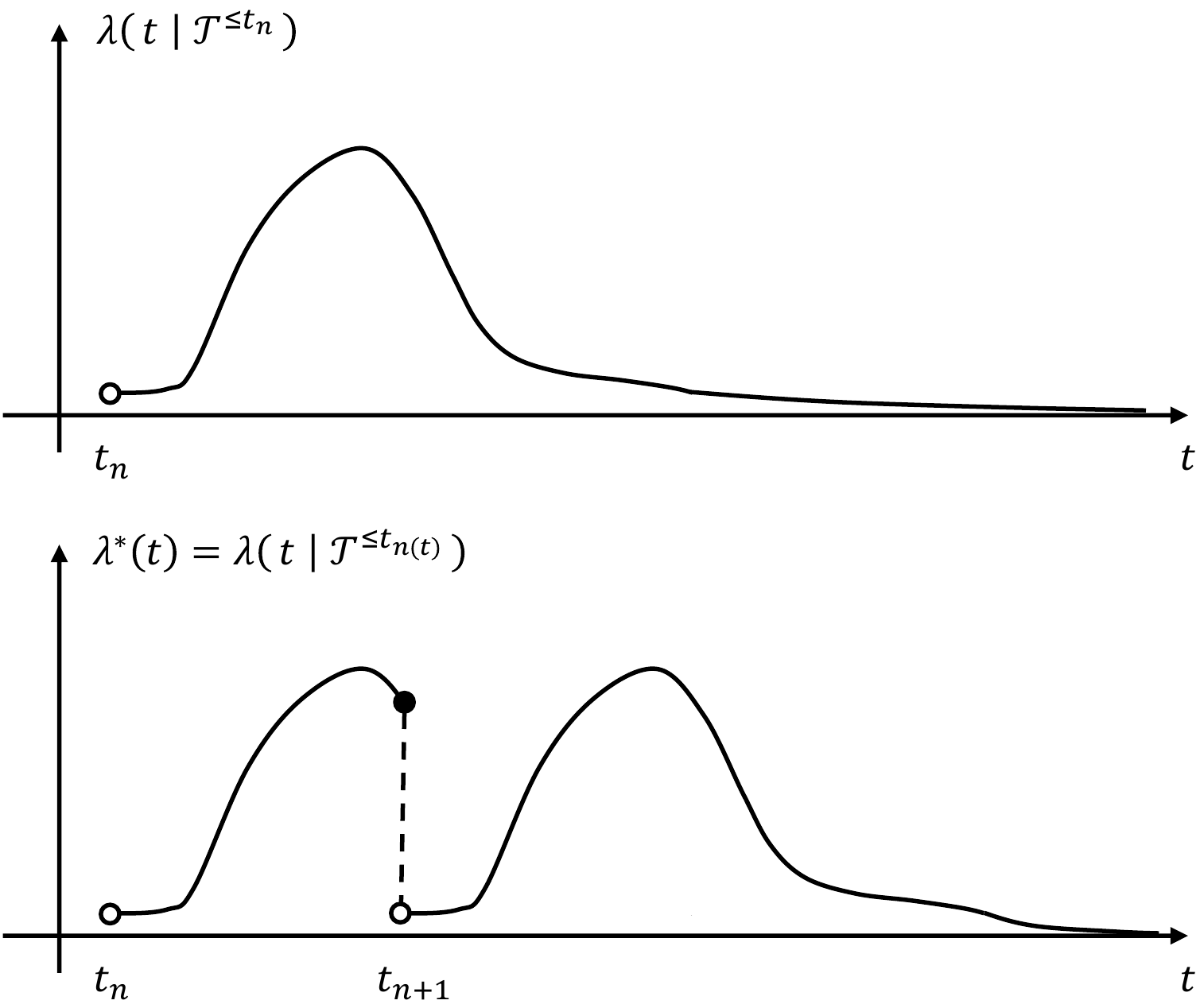}
 \caption{Conditional intensity functions of our definition~(top) and the standard one~(bottom). The standard conditional intensity function $\lambda^\ast$ can be obtained by joining multiple conditional intensity functions of our definition in a left-continuous way.}
 \label{fig:conditional-intensity}
\end{figure}
Our notation is different from the standard one employed by many others including a textbook~\cite{daley2003}.
A major difference is the conditional intensity function as illustrated in Figure~\ref{fig:conditional-intensity}.
Our conditional intensity function $\lambda(t\mid \cT^{\leq t_n})$ is defined for $t>t_n$ and is consistently conditioned on the history of events up to $t_n$~(the top panel of Figure~\ref{fig:conditional-intensity}).
On the other hand, the standard conditional intensity function $\lambda^\ast(t)$ is defined for all $t$, and the history that conditions it is ambiguous and depends on the context;
it is sometimes conditioned on the history of events that occured before~(and not including) $t$, which is represented as $\cT^{\leq t_{n(t)}}$~(the bottom panel of Figure~\ref{fig:conditional-intensity}), and it is sometimes equivalent to our definition of the conditional intensity function.

While such an ambiguity helps to simplify equations~(\eg, the compensator $\Lambda^{[0,T]}=\int_0^T \lambda^\ast(t)\dl{t}$ can be represented by a single integral),
it is sometimes very confusing especially for those who are not familiar with temporal point processes. Therefore, we employ a less ambiguous definition.
In order to represent the standard conditional intensity function by our conditional intensity function, we introduced the left-continuous counting process $n(t)$ as illustrated in Figure~\ref{fig:counting}; with this counting process, we obtain the relationship between the standard and our conditional intensity functions, $\lambda^\ast(t)=\lambda(t\mid\cT^{\leq t_{n(t)}})$.

\section{Conditions for Conditional Intensity Function}\label{sec:proof-mpp-unique}
This section provides a proof of Proposition~\ref{prop:mult-point-proc-unique}, which states several conditions under which the conditional intensity function can specify a marked point process uniquely.

\begin{proof}[Proof of Proposition~\ref{prop:mult-point-proc-unique}]
 Since Equation~\eqref{eq:8} reads as,
\begin{align*}
 \int_X \dl{\bp} \lambda(t,\bp \mid \cT_X^{\leq t_n})  &= \dfrac{f\left(t \mid \cT_{X}^{\leq t_{n}}\right)}{1 - F\left(t \mid \cT_{X}^{\leq t_{n}}\right)}\\
 &= - \diff*{\log(1 - F(t \mid \cT_{X}^{\leq t_n}))}{t},
\end{align*}
 we can represent the cumulative distribution function by the conditional intensity function as,
 \begin{align}
  \label{eq:22}F(t \mid \cT_X^{\leq t_n}) = 1 - \exp\left({-\int_{t_n}^t \dl{s} \int_{X}\dl{\bp} \lambda(s,\bp \mid \cT_{X}^{\leq t_n})}\right).
 \end{align}
 In order for the conditional intensity function to define a proper cumulative distribution function, the function $F(t \mid \cT_{X}^{\leq t_n})$ defined as Equation~\eqref{eq:22} must satisfy the following four conditions:
 \begin{enumerate}
  \item $\displaystyle\lim_{t\rightarrow\infty} F(t \mid \cT_{X}^{\leq t_n}) = 1$,
  \item $\displaystyle\lim_{t\rightarrow -\infty} F(t \mid \cT_{X}^{\leq t_n}) = 0$,
  \item $F(t \mid \cT_{X}^{\leq t_n})$ is non-decreasing in $t$, and 
  \item $F(t \mid \cT_{X}^{\leq t_n})$ is right-continuous.
 \end{enumerate}
 
 Condition~1 is satisfied by Assumption~2.
 Condition~2 is satisfied because $F(t \mid \cT_X^{\leq t_n})=0$ holds for any $t\leq t_n$.
 Condition~3 holds because $\lambda(t, \bp)\geq 0$ implies that the integral in the exponential function in Equation~\eqref{eq:22} is non-decreasing.
 Condition~4 holds because Assumption~3 ensures that the exponent in Equation~\eqref{eq:22} is right-continuous, which indicates that Equation~\eqref{eq:22} itself is right-continuous.
\end{proof}

\section{Properties of Differentiable Point Processes}\label{appendix:prop-diff-point}
This section provides several properties of differentiable point processes and their proofs.
First, let us prove Proposition~\ref{prop:diff2mark}, which clarifies the conditional intensity function of a differentiable point process.

\begin{proof}[Proof of Proposition~\ref{prop:diff2mark}]
The following calculus clarifies the relationship:
\begin{align*}
\nonumber & \Pr\left[t_{n+1}\in[t, t+\dl{t}], \bp_{n+1}=\bp \mid \cT_{\convzero{\bbone^D}}^{<t}\right]\\
\nonumber = & \Pr\left[t_{n+1}\in[t, t+\dl{t}]\mid \cT_{\convzero{\bbone^D}}^{<t}\right] \\
\nonumber &\cdot p\left(\bp_{n+1}=\bp \mid t_{n+1}\in[t, t+\dl{t}], \cT_{\convzero{\bbone^D}}^{< t}\right)\\
 = & \bar{\lambda} \dl{t} \cdot g_\tau\left(\begin{bmatrix} \bp\\ 1- \|\bp\|_1 \end{bmatrix}; \bpi_{\bar{\lambda}}\circ\bm{\lambda}\left(t \mid \cT_{\convzero{\bbone^D}}^{\leq t_n}\right)\right),
\end{align*}
 where let $\cT_{\convzero{\bbone^D}}^{<t}$ denote the event $t_{n+1} \notin (t_n, t)$ and $\cT_{\convzero{\bbone^D}}^{\leq t_n}$.
 This suggests that in a differentiable point process, time stamps are distributed according to the homogeneous Poisson process with intensity $\bar{\lambda}$, and each mark is distributed according to the concrete distribution.
\end{proof}

We then investigate two properties of the differentiable point process.
Proposition~\ref{prop:differentiability} states that a realization of the differentiable point process is differentiable with respect to model parameters under mild conditions.
Proposition~\ref{prop:convergence} states that the differentiable point process becomes equivalent to the original point process as temperature goes zero.

\begin{prop}
\label{prop:differentiability}
 Let $\lambda_\theta(t, \bp \mid \cT_{\bbone^D}^{\leq t_n})$ be a conditional intensity function of a multivariate point process parameterized by $\theta$.
 Assume that the conditional intensity function can be calculated with $\cT_{\convzero{\bbone^D}}^{\leq t_n}$ and is differentiable with respect to any mark in $\cT_{\convzero{\bbone^D}}^{\leq t_n}$ and $\theta$.
 Then, the marks of a realization of the corresponding differentiable point process is differentiable with respect to $\theta$.
\end{prop}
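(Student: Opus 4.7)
The plan is to proceed by induction on the index of events produced by Algorithm~\ref{alg:mult-diff-point}, exploiting the fact that (i)~the time stamps come from a homogeneous Poisson process of fixed rate $\bar{\lambda}$ that has no dependence on $\theta$, and (ii)~each mark is a reparameterized sample from the concrete distribution whose parameter is a differentiable function of $\theta$ and of past marks.

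First I would fix the source of randomness so the sampling becomes a deterministic map. Specifically, I would write the time stamps $\{s_k\}$ as a realization of the base Poisson process with intensity $\bar{\lambda}$, which is independent of $\theta$ (so $\partial s_k/\partial\theta=0$). For each event I would express the concrete-distribution sample via the Gumbel-softmax reparameterization: draw $u_{k,d}\iid U[0,1]$, form Gumbel noise $g_{k,d}=-\log(-\log u_{k,d})$, and set
\begin{equation*}
\bp_k \;=\; \mathrm{softmax}_\tau\!\Big(\log\!\big(\bpi_{\bar{\lambda}}\!\circ\bm{\lambda}_\theta(s_k\mid\cT^{<s_k})\big) + \bg_k\Big),
\end{equation*}
where the randomness $\bg_k$ is independent of $\theta$. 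Then the map from $\theta$ to $\bp_k$ is a composition of $\theta\mapsto\bm{\lambda}_\theta$, the normalization $\bpi_{\bar{\lambda}}\circ(\cdot)$, the componentwise logarithm on $\bbR_{>0}^{D+1}$, addition of a fixed vector, and $\mathrm{softmax}_\tau$, all of which are differentiable at every argument in their domain.

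Next I would carry out the induction. For the base case, the history before the first event is empty, so $\bm{\lambda}_\theta(s_1\mid\emptyset)$ depends only on $\theta$ directly; by the hypothesis of the proposition this map is differentiable, and by the reparameterization above $\bp_1$ is a differentiable function of $\theta$. For the inductive step, suppose $\bp_1(\theta),\dots,\bp_k(\theta)$ are differentiable in $\theta$. The conditional intensity evaluated at $s_{k+1}$ is $\bm{\lambda}_\theta(s_{k+1}\mid\{(s_j,\bp_j(\theta))\}_{j\le k})$, and by the hypothesis it is differentiable jointly in $\theta$ and in each mark argument lying in $\convzero{\bbone^D}$. The chain rule (total derivative, accounting for the explicit $\theta$-dependence and the $\theta$-dependence routed through the past marks) shows that $\bm{\lambda}_\theta(s_{k+1}\mid\cdot)$ is differentiable in $\theta$, and then the reparameterized softmax step yields differentiability of $\bp_{k+1}(\theta)$.

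The only subtle point is the termination condition $s>T$ in line~4: since the $s_k$ do not depend on $\theta$, the number of iterations (and hence the identity of the output sequence) is a constant in a neighborhood of any generic $\theta$, so we may apply the induction to the fixed, finite list of events produced for the realized base-process sample. The main obstacle will be making this ``fixed sample'' framing precise and being careful that the arguments of $\bm{\lambda}_\theta$ at step $k{+}1$ lie in $\convzero{\bbone^D}$ (which is where the assumed differentiability in the mark coordinates is valid); this is automatic because $\mathrm{softmax}_\tau$ outputs land in the open simplex, so their first $D$ coordinates lie in $\convzero{\bbone^D}$ as required by Definition~\ref{def:mult-diff-point}. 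Everything else is routine chain-rule bookkeeping.
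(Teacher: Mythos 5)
Your proposal is correct and follows essentially the same route as the paper's proof: an induction over the events generated by Algorithm~\ref{alg:mult-diff-point}, with the chain rule applied through the concrete-distribution reparameterization, the map $\bpi_{\bar{\lambda}}\circ(\cdot)$, and the assumed differentiability of $\bm{\lambda}_\theta$ in $\theta$ and in the past marks. Your additional care about fixing the base-process sample (so the number of events is $\theta$-independent) and about the marks landing in $\convzero{\bbone^D}$ is sound but not needed beyond what the paper's terser argument already implies.
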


\begin{proof}
We prove Proposition~\ref{prop:differentiability} by induction.
 The first mark is distributed according to $\bm{\pi}_{\bar{\lambda}}\circ\bm{\lambda}_\theta\left(t \mid \emptyset\right)$, which is differentiable with respect to $\theta$.
 Assume that marks observed up to (but not including) time $t$ are differentiable with respect to $\theta$.
 A mark $\bp$ at time $t$ is a realization of the concrete distribution with parameter $\bm{\pi}:=\bm{\pi}_{\bar{\lambda}}\circ\bm{\lambda}_\theta\left(t \mid \cT_{\convzero{\bbone^D}}^{\leq t_n}\right)$, and thus, is differentiable with respect to $\bm{\pi}$.
 $\bm{\pi}$ is differentiable with respect to $\bm{\lambda}_\theta$, and $\bm{\lambda}_\theta$ is assumed to be differentiable with respect to $\theta$ and the past marks, which are differentiable by assumption, and this completes the proof.
\end{proof}

\begin{prop}
 \label{prop:convergence}
 Assume that $\lambda\left(t, \bp \mid \cT_{\convzero{\bbone^D}}^{\leq t_n}\right)=\lambda\left(t, \bp \mid \cT_{\convzero{\bbone^D}}^{\leq t_n}\backslash\{(t_k, \bp_k)\}\right)$ holds for any $\cT_{\convzero{\bbone^D}}^{\leq t_n}$ and any $k\in[n]$ such that $\bp_k=\mathbf{0}$.
 Then, in the limit of $\tau\rightarrow +0$, the output of Algorithm~\ref{alg:mult-diff-point} is distributed according to $\mathcal{MPP}(\lambda)$ if we discard the event with mark $\bp=\mathbf{0}$.
\end{prop}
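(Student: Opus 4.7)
The plan is to reduce the $\tau\rightarrow +0$ limit of Algorithm~\ref{alg:mult-diff-point} to Algorithm~\ref{alg:thinning-multivariate}, whose correctness has already been established in the paper. I would achieve this by coupling the two algorithms on shared randomness via the Gumbel-max representation of the concrete distribution and invoking the invariance assumption of the proposition to absorb the bookkeeping difference between the two histories.

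\textbf{Coupling.} I would first run both algorithms on the same candidate times $\{s_k\}$ drawn from $\pp(\bar{\lambda})$ and the same Gumbel vectors $\{\mathbf{g}_k\}$. The concrete sample with temperature $\tau$ can be written as $\mathrm{softmax}_\tau(\log\bpi+\mathbf{g})$ while the corresponding categorical sample is $\arg\max(\log\bpi+\mathbf{g})$. Since softmax converges to argmax as $\tau\rightarrow +0$, the concrete mark converges almost surely to a vertex of $\bbone^{D+1}$, which is either some $\bone_d$ (an ``accept'' in Algorithm~\ref{alg:thinning-multivariate} producing mark $\bone_d$) or the ``reject'' coordinate that renders $\bp=\mathbf{0}$ in Algorithm~\ref{alg:mult-diff-point}.

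\textbf{Induction on candidate times.} I would then prove by induction on $k$ that, in the limit $\tau\rightarrow +0$, the $k$-th candidate is treated identically by the two algorithms, and that the two histories agree up to a finite list of $\mathbf{0}$-marked events retained only by $\dpp$. The subtle point is that at the $k$-th step Algorithm~\ref{alg:mult-diff-point} feeds an enlarged history into $\bm{\lambda}$ at line~6. The invariance assumption $\lambda(t,\bp\mid \cT)=\lambda(t,\bp\mid \cT\backslash\{(t_k,\mathbf{0})\})$ precisely closes this gap, so in the limit both algorithms invoke their sampler with the same parameter $\bpi_{\bar{\lambda}}\circ\bm{\lambda}(s_k\mid\cdot)$. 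Combined with the Gumbel coupling of the previous step, this gives the inductive step.

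\textbf{Conclusion and main obstacle.} The $\tau\rightarrow +0$ limit of the $\dpp$ output then almost surely coincides, after discarding $\mathbf{0}$-marked events, with the output of Algorithm~\ref{alg:thinning-multivariate} on the same randomness, whose distribution is $\mpp(\lambda)$ by the thinning argument already cited. The main obstacle is propagating almost sure convergence of each individual concrete draw to almost sure convergence of the full sequence on $[0,T]$; this is handled because $\pp(\bar{\lambda})$ produces almost surely finitely many candidate times in $[0,T]$, so only finitely many concrete draws are ever taken and a finite union of null sets remains null. A secondary technical point is that the stated invariance applies only to exact $\mathbf{0}$ marks, so passing to the limit implicitly uses continuity of $\lambda$ at the vertices of $\convzero{\bbone^D}$, which is compatible with the differentiability hypothesis used elsewhere in the paper.
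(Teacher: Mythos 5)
Your proposal is correct and takes essentially the same route as the paper's proof: the concrete draw converges (via the Gumbel-max representation, as in Maddison et al.) to the categorical draw of Algorithm~\ref{alg:thinning-multivariate}, so in the limit $\tau\rightarrow +0$ the output of Algorithm~\ref{alg:mult-diff-point} differs only by the retained zero-marked events, which the invariance assumption renders harmless. Your explicit coupling, induction over the almost surely finitely many candidate times, and the remark that one needs continuity of $\lambda$ in the past marks merely spell out technical steps that the paper's shorter argument leaves implicit.
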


\begin{proof}
 As proven by \citeauthor{maddison2017}~\citeyear{maddison2017}, the random variable following the concrete distribution converges to the one-hot representation of the categorical variate in the limit of $
\tau\rightarrow +0$.
The random variable in line~6 of Algorithm~\ref{alg:mult-diff-point} satisfies the following,
\begin{align}
 \Pr\left[\lim_{\tau\rightarrow +0} \begin{bmatrix}\bp \\ r\end{bmatrix}=\bone_d\mid s, \cT\right] = \dfrac{\lambda(s, \bone_d \mid \cT)}{\bar{\lambda}},
\end{align}
 for $d\in[D+1]$, 
 where let $\lambda(s, \bone_{D+1} \mid \cT) \equiv \bar{\lambda} - \sum_{d=1}^{D}\lambda(s,\bone_d\mid \cT)$.
 The above expression states that if the value of the conditional intensity function is the same,
 the random variable in line~6 of Algorithm~\ref{alg:mult-diff-point} is equivalent to that in line~6 of Algorithm~\ref{alg:thinning-multivariate}.
 The only difference between these algorithms is whether the event with zero mark $\bp=\mathbf{0}$ is discarded~(Algorithm~\ref{alg:thinning-multivariate}) or not~(Algorithm~\ref{alg:mult-diff-point}).
 If the assumption made in Proposition~\ref{prop:convergence} is satisfied, zero marks do not affect the conditional intensity function, and thus, the output of Algorithm~\ref{alg:mult-diff-point} is distributed according to $\mathcal{MPP}(\lambda)$ if we discard the events with zero marks.
\end{proof}

\section{Properties of $\partial$SNNs}
\label{appendix:prop-diff-spik}
This section describes the properties of $\partial$SNNs.
First, Proposition~\ref{prop:uniqueness} states that the conditional intensity function~(Equation~\eqref{eq:17}) satisfies all of the conditions listed in Proposition~\ref{prop:mult-point-proc-unique}, and thus, it defines an $\bar{\cN}$-marked point process uniquely.
\begin{prop}
 \label{prop:uniqueness}
 Assume that the filter functions $\{f_{d^\prime,d}(s)\}_{d,d^\prime\in[D]}$ are continuous with respect to $s$.
 Then, the conditional intensity function~(Equation~\eqref{eq:17}) uniquely defines an $\bar{\cN}$-marked point process.
\end{prop}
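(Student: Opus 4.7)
The plan is to verify the three conditions of Proposition~\ref{prop:mult-point-proc-unique} applied with $X=\bar{\cN}=\cO\cup\convzero{\cH}$, where we interpret $\int_{\bar{\cN}}\dl{\bp}$ as integration against the natural reference measure on $\bar{\cN}$ (counting measure on the discrete subset $\cO\subseteq\bbone^D$ plus Lebesgue measure on the simplex $\convzero{\cH}$), so that the delta component and the concrete-density component of Equation~\eqref{eq:17} are handled simultaneously. A preliminary identity that I would use throughout is
\begin{align*}
\int_{\bar{\cN}}\lambda^{\partial\mathrm{SNN}}\!\left(s,\bp\mid \cT_{\bar{\cN}}^{\leq t_n};\bar{\lambda},\tau\right)\dl{\bp} = \sum_{\bone_d\in\cO}\lambda^{\mathrm{SNN}}\!\left(s,\bone_d\mid\cT_{\bar{\cN}}^{\leq t_n}\right) + \bar{\lambda},
\end{align*}
where the second summand arises because $g_\tau(\,\cdot\,;\bpi_{\bar{\lambda}}\circ\blambda_\cH)$ is a probability density on the simplex containing $\convzero{\cH}$.

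Next I would dispatch conditions~1 and~2 directly. Non-negativity holds because $\sigma$ takes values in $\bbR_{\geq 0}^D$, so each $\lambda^{\mathrm{SNN}}(t,\bone_d\mid\cdot)\geq 0$, and $g_\tau$ is a probability density, so $\lambda_\partial\geq 0$. The identity above gives integrability in $\bp$; local integrability in $t$ then follows because the total rate is uniformly bounded (the scaled sigmoid $\sigma$ is bounded and $\bar{\lambda}$ is finite). Condition~2 is immediate from the same identity: the total rate is at least $\bar{\lambda}>0$ pointwise, so $\int_{t_n}^t \int_{\bar{\cN}}\lambda^{\partial\mathrm{SNN}}\dl{\bp}\,\dl{s}\geq\bar{\lambda}(t-t_n)\to\infty$ as $t\to\infty$.

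The main obstacle is condition~3, the right-continuity of the cumulative integral in $t$. For this I would leverage the continuity assumption on the filters. Because each $f_{d^\prime,d}$ is assumed continuous on $\bbR$ and vanishes on $(-\infty,0)$, we have $f_{d^\prime,d}(0)=0$, so for any fixed history $\cT_{\bar{\cN}}^{\leq t_n}$ the membrane potential $u_d(t\mid\cT_{\bar{\cN}}^{\leq t_n})$ from Equation~\eqref{eq:18} is a finite sum of continuous translates of the filters and is therefore continuous in $t$ on $(t_n,\infty)$. Continuity of $\sigma$ then makes $\blambda_\cH(t\mid\cdot)$ and each $\lambda^{\mathrm{SNN}}(t,\bone_d\mid\cdot)$ continuous in $t$; continuity of the simplex map $\bpi_{\bar{\lambda}}\circ\blambda_\cH$ (well-defined because $\bar{\lambda}>\sum_{\bp\in\bbone^{|\cH|}}\lambda^{\mathrm{SNN}}$ by the hypothesis on $\bar{\lambda}$, so the slack coordinate stays strictly positive) and continuity of $g_\tau$ in its parameter then makes $\int_{\bar{\cN}}\lambda^{\partial\mathrm{SNN}}(s,\bp\mid\cdot)\dl{\bp}$ continuous in $s$. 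Hence its antiderivative in $t$ is continuous, and in particular right continuous. Assembling the three conditions via Proposition~\ref{prop:mult-point-proc-unique} yields the claim.
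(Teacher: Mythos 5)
Your proposal is correct and follows essentially the same route as the paper's proof: you verify the three conditions of Proposition~\ref{prop:mult-point-proc-unique} using the key identity that the $\convzero{\cH}$-part of the intensity integrates to $\bar{\lambda}$ (since $g_\tau$ is a probability density), obtain divergence of the compensator from the resulting $\bar{\lambda}(t-t_n)$ term, and derive right-continuity from the continuity of the filter functions. Your treatment of condition~3 is somewhat more detailed than the paper's (tracing continuity through the membrane potential, $\sigma$, the simplex map, and $g_\tau$), but the argument is the same.
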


\begin{proof}
 We will confirm the assumptions of Proposition~\ref{prop:mult-point-proc-unique}.
 Observing that,
 \begin{align*}
  &\int_{t_n}^{t}\dl{s} \int_{\bar{\cN}} \dl{\bp} \lambda\left(s, \bp \mid \cT_{\bar{\cN}}^{\leq t_n}\right) \\
  = &
  \int_{t_n}^{t}\dl{s} \left[\int_{\convzero{\cH}} \dl{\bp} \lambda\left(s, \bp \mid \cT_{\bar{\cN}}^{\leq t_n}\right) \right.\\
  & \hspace{1.5cm}\left.+ \sum_{\bp\in\cO} \lambda\left(s, \bp \mid \cT_{\bar{\cN}}^{\leq t_n}\right) \right]\\
  = & \int_{t_n}^{t}\dl{s} \left[\bar{\lambda} + \sum_{\bp\in\cO} \lambda\left(s, \bp \mid \cT_{\bar{\cN}}^{\leq t_n}\right) \right]\\
  = & \bar{\lambda}(t-t_n) + \int_{t_n}^{t}\dl{s} \sum_{\bp\in\cO} \lambda\left(s, \bp \mid \cT_{\bar{\cN}}^{\leq t_n}\right),
 \end{align*}
 the first condition is satisfied.
 By taking $t\rightarrow\infty$ in the above expression, we can confirm that the second condition is satisfied~(the first term $\bar{\lambda}(t-t_n)$ goes to infinity
 and the second term is guaranteed to be non-negative).
 The third condition is satisfied because $\lambda(s,\bp \mid \cT_{\bar{\cN}}^{\leq t_n})$ is a continuous function with respect to $s>t_n$, which can be guaranteed by the continuity of the filter functions.
\end{proof}

Then, let us discuss two properties of $\partial$SNN.
Proposition~\ref{prop:differentiable-log-likelihood} states that the objective function is differentiable with respect to $\phi$.
Proposition~\ref{prop:converge-snn} states that $\partial$SNN  becomes equivalent to the vanilla SNN in the limit of $\tau\rightarrow +0$.
\begin{prop}
 \label{prop:differentiable-log-likelihood}
 Assume that the filter functions $\{f_{d^\prime,d}(s)\}_{d,d^\prime\in[D]}$ are differentiable with respect to their parameters $\phi$.
 The Monte-Carlo approximation of ELBO~(Equation~\eqref{eq:14}) is differentiable with respect to $\phi$ if we employ $\mathcal{\partial PP}(\lambda_q(t, \bp \mid \cT_{\bar{\cN}};\phi); \bar{\lambda}, \tau)$ as the variational distribution.
\end{prop}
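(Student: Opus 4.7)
The plan is to write the Monte-Carlo approximation of ELBO explicitly using the log-likelihood formula \eqref{eq:12} together with the compensator approximation \eqref{eq:1}, isolate every place where $\phi$ appears, and then apply the chain rule. Concretely, let $\cT_{\convzero{\cH}}(\phi)=\{(t_n,\bp_n(\phi))\}$ be a single realization drawn from $\mathcal{\partial PP}(\lambda_q;\bar{\lambda},\tau)$ by Algorithm~\ref{alg:mult-diff-point}, and let $\{\tilde t_m\}_{m\in[M]}\iid U[0,T]$ be the Monte-Carlo points used for the compensator. The key initial observation is that the time stamps $t_n$ and $\tilde t_m$ are produced by homogeneous Poisson sampling and uniform sampling respectively, both at rates that do not depend on $\phi$; the only way $\phi$ enters the sample is through the marks $\bp_n(\phi)$, via the concrete reparameterization in line~6 of Algorithm~\ref{alg:mult-diff-point}.

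Next, I would invoke Proposition~\ref{prop:differentiability} (with $\lambda_q$ in place of $\lambda_\theta$) to conclude that each $\bp_n(\phi)$ is a differentiable function of $\phi$. Under the stated assumption that the filter functions $f_{d',d}(s)$ are differentiable w.r.t.\ $\phi$, the membrane potentials \eqref{eq:18} are differentiable w.r.t.\ both $\phi$ and every past mark; composing with the differentiable element-wise $\sigma$ shows that $\bm\lambda_{\cH}(t\mid\cT_{\bar{\cN}}^{\leq t_n})$ is differentiable in $\phi$. For the $\partial$SNN conditional intensity \eqref{eq:17}, the observed-mark branch $\lambda^{\mathrm{SNN}}(t,\bp\mid\cdot)$ is a smooth function of the same ingredients, and the hidden-mark branch $\lambda_\partial(t,\bp_\cH\mid\cdot;\bm\lambda_\cH,\bar{\lambda},\tau)$ is a bounded-away-from-zero smooth function of the mark and $\bm\lambda_\cH$ because the concrete density $g_\tau$ is smooth on $\mathrm{conv}(\bbone^{|\cH|+1})$. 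Consequently, each term $\log\lambda^{\partial\mathrm{SNN}}(t_n,\bp_n(\phi)\mid\cT_{\bar{\cN}}^{\leq t_{n-1}}(\phi);\bar{\lambda},\tau)$ appearing in the sum of \eqref{eq:12} is differentiable in $\phi$.

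For the compensator term in \eqref{eq:1}, the sum $\sum_{\bp\in\bar{\cN}}\lambda^{\partial\mathrm{SNN}}(\tilde t_m,\bp\mid\cT_{\bar{\cN}}^{\leq t_{n(\tilde t_m)}}(\phi);\bar\lambda,\tau)$ at each Monte-Carlo point reduces (by marginalizing the concrete density and using that the mass on $\cO$ adds $\sum_{\bp\in\cO}\lambda^{\mathrm{SNN}}$) to a differentiable function of $\bm\lambda$ and therefore of $\phi$. The same argument applies verbatim to $\log q$ because $q$ is defined as $\mathcal{\partial PP}$. Summing the finitely many differentiable terms (per realization) and taking a finite Monte-Carlo average preserves differentiability, so the right-hand side of \eqref{eq:20} is differentiable in $\phi$.

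The main obstacle I foresee is the $\log\lambda_\partial$ contribution at the sampled hidden events: one must check that the concrete density $g_\tau$ evaluated at the reparameterized sample $\bigl[\bp_n(\phi)^\top\ 1-\|\bp_n(\phi)\|_1\bigr]^\top$ stays in the open simplex so that $\log g_\tau$ is well defined and smooth. This is precisely the numerical-stability issue flagged after Proposition~\ref{prop:convergence}; the cleanest fix is to use the log-space parameterization from Appendix~\ref{sec:numer-stable-impl}, which expresses $\log g_\tau$ as a smooth function of the pre-softmax logits so that differentiability follows by the chain rule without ever dividing by a coordinate that could vanish. Once this point is handled, the rest is a routine composition-of-smooth-maps argument.
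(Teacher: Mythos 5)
Your proposal is correct and follows essentially the same route as the paper's proof: marks are the only $\phi$-dependent part of the sample, Proposition~\ref{prop:differentiability} gives their differentiability, and the chain rule through the log-intensity terms and the compensator finishes the argument (the paper additionally notes that the hidden part of the compensator collapses to the constant $\bar{\lambda}T$, which makes that term trivial). Your extra remark about keeping the concrete sample in the relative interior of the simplex so that $\log g_\tau$ is well defined is a valid refinement that the paper only addresses implicitly via the log-space implementation in Appendix~\ref{sec:numer-stable-impl}.
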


\begin{proof}
 We first show that marks of a realization of the variational distribution $\cT_{\cH}(\phi)$ are differentiable with respect to $\phi$.
 We then show that both $\log p(\cT_{\cO}, \cT_{\convzero{\cH}})$ and $\log q(\cT_{\convzero{\cH}})$ are differentiable with respect to the marks of $\cT_{\cH}(\phi)$.
 We finally show that $\log q(\cT_{\convzero{\cH}};\phi)$ is differentiable with respect to $\phi$.
 By the fact that the composition of differentiable functions is differentiable, the proposition is implied by these three statements.

 $\cT_{\cH}(\phi)$ is sampled by Algorithm~\ref{alg:mult-diff-point} using the conditional intensity function $\lambda_q(t,\bp \mid \cT_{\bar{\cN}};\phi)$, which is differentiable with respect to any mark in $\cT_{\bar{\cN}}$ and $\phi$~(by the assumption).
 Therefore, by Proposition~\ref{prop:differentiability}, the marks of $\cT_{\cH}(\phi)$ are differentiable with respect to $\phi$.

The logarithm of the joint distribution $\log p(\cT_{\cO},\cT_{\convzero{\cH}})$ can be written as,
 \begin{align*}
  &\log p(\cT_{\cO},\cT_{\convzero{\cH}}; \theta) \\
= &\sum_{(t,\bp)\in\cT_{\cO}} \log \lambda^{\mathrm{SNN}}\left(t, \bp \mid \cT_{\bar{\cN}}^{\leq t_{n(t)}}\right) \\
  & + \sum_{(t,\bp)\in\cT_{\convzero{\cH}}} \log \lambda_{\partial}\left(t, \bp_{\cH} \mid \cT_{\bar{\cN}}^{\leq t_{n(t)}};\blambda_\cH, \bar{\lambda}, \tau\right)\\
  & - \bar{\lambda} T - \int_{0}^T \dl{s}\sum_{\bp\in\cO}\lambda^{\mathrm{SNN}}\left(s,\bp \mid \cT_{\bar{\cN}}^{\leq t_{n(s)}}\right).
 \end{align*}
 The first and the last terms are differentiable with respect to marks of $\cT_{\convzero{\cH}}$, and the third term does not depend on $\cT_{\convzero{\cH}}$.
 For any $(t,\bp)\in\cT_{\convzero{\cH}}$, the summand of the second term,
 \begin{align*}
  & \log \lambda_{\partial}\left(t, \bp_{\cH} \mid \cT_{\bar{\cN}}^{\leq t_{n(t)}};\blambda_\cH, \bar{\lambda}, \tau\right)\\
  = & \log\bar{\lambda} + \log g_\tau\left(
\begin{bmatrix} 
\bp_{\cH}\\ 1- \|\bp_{\cH}\|_1 
\end{bmatrix}; 
\bpi_{\bar{\lambda}}\circ\bm{\lambda}_{\cH}\left(t \mid \cT_{\bar{\cN}}^{\leq t_n(t)}\right)\right),
 \end{align*}
 is differentiable with respect to $\bp_\cH$ because the probability density function of the concrete distribution is differentiable with respect to $\bp_\cH$.
 It is also differentiable with respect to the past marks in $\cT_{\bar{\cN}}^{\leq t_{n(t)}}$ because the probability density function $g_\tau$ is differentiable with respect to its parameter $\bm{\pi}:=\bm{\pi}_{\bar{\lambda}}\circ\bm{\lambda}_\cH\left(t \mid \cT_{\bar{\cN}}^{\leq t_n(t)}\right)$, which is differentiable with respect to the marks in $\cT_{\bar{\cN}}^{\leq t_n(t)}$.
 Therefore, $\log p(\cT_\cO, \cT_{\convzero{\cH}};\theta)$, is differentiable with respect to the marks of $\cT_{\convzero{\cH}}$.

 The logarithm of the variational distribution $\log q(\cT_{\convzero{\cH}})$ can be written as,
 \begin{align}
\nonumber  &\log q(\cT_{\convzero{\cH}}) \\
\label{eq:2} =& \sum_{(t,\bp)\in\cT_{\convzero{\cH}}} \log \lambda_{\partial}\left(t, \bp_{\cH} \mid \cT_{\bar{\cN}}^{\leq t_{n(t)}};\blambda_\cH, \bar{\lambda}, \tau\right)
- \bar{\lambda}T,
 \end{align}
 which is also differentiable with respect to the marks of $\cT_{\convzero{\cH}}$ in the same way as the above discussion.

 Finally, $\log q(\cT_{\convzero{\cH}};\phi)$ is differentiable with respect to $\phi$, because the first term of Equation~\eqref{eq:2} is differentiable with respect to $\blambda_{\cH}$, which is differentiable with respect to $\phi$~(partially by the assumption that the filter functions are differentiable with respect to $\phi$).

The proposition follows from the three statements above.
\end{proof}

\begin{prop}
 \label{prop:converge-snn}
 In the limit of $\tau\rightarrow +0$,
 a realization of $\partial$SNN~(Equation~\eqref{eq:17}) is distributed according to
 the vanilla SNN~(Equation~\eqref{eq:10}) if we discard events with mark $\bp=\mathbf{0}$.
\end{prop}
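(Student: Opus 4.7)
The plan is to reduce the claim to Proposition \ref{prop:convergence} by exploiting the additive decomposition of the conditional intensity in Equation \eqref{eq:17} into an observable part and a hidden part. First I would note that observable events always carry marks in $\cO\subset\bbone^D$ (so never $\mathbf{0}$) and that the observable component of $\lambda^{\partial\mathrm{SNN}}$ is literally a sum of Dirac spikes of the vanilla $\lambda^{\mathrm{SNN}}$; hence the observable sub-process of $\partial$SNN is, tautologically, the vanilla SNN's observable sub-process driven by whatever history $\cT_{\bar{\cN}}^{\leq t_n}$ the $\partial$SNN has accumulated. The real content is therefore to show that, after discarding zero-marked events, the hidden sub-process converges to the vanilla SNN on $\cH$.

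Next I would verify the hypothesis of Proposition \ref{prop:convergence} for the hidden component. The membrane potential in Equation \eqref{eq:18} enters as $\mathbf{f}_d(t-t^\prime)\cdot\bp$, so a past event $(t^\prime,\mathbf{0})$ contributes exactly $0$ to every $u_d$ and hence to $\blambda_\cH$ and to $\lambda^{\mathrm{SNN}}$ on $\cO$. In other words, zero-marked events are invisible to the intensity function: removing them from $\cT_{\bar{\cN}}^{\leq t_n}$ does not alter the value of $\lambda^{\partial\mathrm{SNN}}(t,\bp\mid\cT_{\bar{\cN}}^{\leq t_n})$ for any future $(t,\bp)$. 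This is exactly the invariance required by the hypothesis of Proposition \ref{prop:convergence}.

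With the invariance in hand I would invoke Proposition \ref{prop:convergence} applied to the hidden-intensity block $\lambda_\partial(t,\bp_\cH\mid\cdot;\blambda_\cH,\bar\lambda,\tau)$. As $\tau\to +0$ the concrete draw in line~6 of Algorithm~\ref{alg:mult-diff-point} collapses to a one-hot (or zero) vector with probabilities $\lambda(t,\bone_d\mid\cT)/\bar\lambda$ for $d\in\cH$ and $1-\sum_{d\in\cH}\lambda(t,\bone_d\mid\cT)/\bar\lambda$ for the rejection slot. So in the limit the hidden subprocess of $\partial$SNN is generated by thinning a homogeneous Poisson process of rate $\bar\lambda$, keeping only events with non-zero mark, which is precisely Algorithm~\ref{alg:thinning-multivariate} for $\mpp(\lambda^{\mathrm{SNN}}|_\cH)$. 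Discarding the zero-marked events therefore yields a realization of the vanilla hidden SNN conditional on the observable history.

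Finally, since the observable and hidden blocks share exactly the same dependence on the filtered history $\cT_{\bar{\cN}}^{\leq t_n}$ in both models, and since zero-marked events are invariantly ignored, the joint law of the surviving events under $\partial$SNN as $\tau\to +0$ coincides with the joint law prescribed by Equation \eqref{eq:10}. I do not expect a single hard technical step; the only care needed is to argue the limit at the level of the generating algorithm (time stamps are Poisson with rate $\bar\lambda$ in both cases, marks converge by the concrete $\to$ categorical limit of \citeauthor{maddison2017}) rather than at the level of likelihoods, since $\lambda^{\partial\mathrm{SNN}}$ contains Dirac masses on $\cO$ that prevent a naive pointwise comparison of densities.
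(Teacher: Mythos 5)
Your proposal is correct and follows essentially the same route as the paper's proof: both arguments rest on (i) the observation that a zero-marked event contributes $\mathbf{f}_d(t-t^\prime)\cdot\mathbf{0}=0$ to every membrane potential, so discarding such events leaves $\lambda^{\partial\mathrm{SNN}}$ unchanged, and (ii) the concrete-to-categorical limit as $\tau\rightarrow +0$, under which the intensity collapses to $\lambda^{\mathrm{SNN}}$ on one-hot marks with the residual rate $\bar{\lambda}-\sum_{\bp\in\cH}\lambda^{\mathrm{SNN}}$ assigned to the zero mark. The only cosmetic difference is that you route the hidden block explicitly through Proposition~\ref{prop:convergence} and the thinning-algorithm view, whereas the paper writes out the limiting intensity of Equation~\eqref{eq:17} directly; the substance is identical.
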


\begin{proof}
 Since for any event $(t_k,\bp_k)$~($k\in[n]$) such that $\bp_k=\mathbf{0}$,
\begin{align*}
 \lambda^{\mathrm{SNN}}\left(t,\bp \mid \cT_{\bar\cN}^{\leq t_n}\right) = \lambda^{\mathrm{SNN}}\left(t,\bp \mid \cT_{\bar\cN}^{\leq t_n}\backslash\{(t_k,\bp_k)\}\right),
\end{align*}
 holds, the event with mark $\bp=\mathbf{0}$ has no influence on the conditional intensity function.
 In the limit of $\tau\rightarrow +0$,
 \begin{align*}
  & {\lambda^{\partial\mathrm{SNN}}\left(t,\bp \mid \cT_{\bar\cN}^{\leq t_n}\right)}\\
  = &
  \begin{cases}
   \lambda^{\mathrm{SNN}}\left(t,\bp \mid \cT_{\bar\cN}^{\leq t_n}\right) & (\bp\in\cO\cup\cH),\\
   \displaystyle\left(\bar{\lambda} - \sum_{\bp\in\cH} \lambda^{\mathrm{SNN}}\left(t,\bp \mid \cT_{\bar\cN}^{\leq t_n}\right)\right) & (\bp=\mathbf{0}),
  \end{cases}
 \end{align*}
 holds.
 As discussed above, the event with mark $\bp=\mathbf{0}$ has no influence on computing the conditional intensity function, and can be removed without changing the conditional intensity function.
 Therefore, a realization of $\mpp(\lambda^{\partial\mathrm{SNN}})$ is equivalent to that of $\mpp(\lambda^{\mathrm{SNN}})$ if we discard all the events with mark $\bp=\mathbf{0}$.
\end{proof}

\section{Numerically Stable Implementation}
\label{sec:numer-stable-impl}
For numerical stability, we recommend to represent all probability and subprobability vectors and conditional intensity functions in the logarithmic scale.
In this section, we describe an accurate computation of the parameter of the concrete distribution.

When computing the parameter of the concrete distribution shown below in the logarithmic scale,
\begin{align*}
 \begin{split}
 &\bpi_{\bar{\lambda}}\circ\bm{\lambda}\left(t \mid \cT_{\convzero{\bbone^D}}^{\leq t_n}\right)\\
 = &\frac{1}{\bar{\lambda}}
 \begin{bmatrix}
 \bm{\lambda}\left(t \mid \cT_{\convzero{\bbone^D}}^{\leq t_n}\right) & \bar{\lambda} - \left\|\bm{\lambda}\left(t \mid \cT_{\convzero{\bbone^D}}^{\leq t_n}\right)\right\|_1
 \end{bmatrix},
\end{split}
\end{align*}
it is straightforward to compute the first $D$ elements with the logarithm of the conditional intensity function, $\log\lambda\left(t, \bp \mid \cT_{\convzero{\bbone^D}}^{<t}\right)$.
However, the last element,
\begin{align}
 \label{eq:3}
1-\frac{\sum_{\bp^\prime\in \bbone^D}\lambda\left(t, \bp^\prime \mid \cT_{\convzero{\bbone^D}}^{<t}\right)}{\bar{\lambda}}, 
\end{align}
 is not trivial to compute accurately in the logarithmic scale.

We resort to $\mathtt{log1mexp}$~\cite{Machler2012} to compute it,
which allows us to compute $\log(1 - \exp(-|a|))$ accurately for $a\neq 0$ as follows:
\begin{align*}
 \mathtt{log1mexp}(a) =
 \begin{cases}
  \log(-\mathtt{expm1}(-a)) & (0<a\leq \log 2),\\
  \mathtt{log1p}(-\exp(-a)) & (a> \log 2),
 \end{cases}
\end{align*}
where $\mathtt{expm1(x)}$ and $\mathtt{log1p}(x)$ approximately compute $\exp(x)-1$ and $\log(1+x)$ respectively by using a few terms of their Taylor series.
Since we can compute $\log p := \log\left[{\sum_{\bp^\prime\in \bbone^D}\lambda\left(t, \bp^\prime \mid \cT_{\convzero{\bbone^D}}^{<t}\right)}\right] - \log{\bar{\lambda}}$ by $\mathtt{logsumexp}$,
the computation of Equation~\eqref{eq:3} boils down to the computation of $\log (1-p)$ given $\log p$~($p\in[0,1)$).
Since $\log(1-p) = \log(1-\exp(-|\log p|))$ holds for $p\in[0,1)$, we can utilize $\mathtt{log1mexp}$ to compute it.
\end{document}